\documentclass{amsart}

\usepackage{color}
\usepackage{hyperref}

\usepackage{tikz, tikz-cd}
\usetikzlibrary{arrows}
\usepackage{enumitem}

\usepackage{amssymb,amsfonts,mathrsfs,amsxtra,dsfont}
\usepackage{amsmath,amsthm,amscd}

\newcommand{\mor}[3]{\mathrm{Hom}_{#1}\left(#2,#3\right)}
\newcommand{\catname}[1]{\mathbf{#1}}

\newcommand{\sets}{\catname{Set}}

\newcommand{\covs}{\catname{Cov}}

\newcommand{\flag}{\begin{tikzpicture}
	\draw [thick] (0,0) -- ++(0,1.2ex) -- ++(.6ex,-.4ex) -- ++(-.6ex,-.4ex);
\end{tikzpicture}}
\newcommand{\covsfl}{\catname{Cov}_{_{\flag}}}
\newcommand{\sieves}{\catname{Sieve}}

\newcommand{\wghts}{\catname{Weight}}
\newcommand{\mets}{\catname{Met}}
\newcommand{\ults}{\catname{Ult}}
\newcommand{\cuts}{\catname{Cut}}
\newcommand{\trees}{\catname{Tree}}

\newcommand{\ants}[1]{\mathbf{A}^{\!#1}}
\newcommand{\antip}[1]{\mathit{A}^{#1}}

\newcommand{\fname}[1]{\mathcal{#1}}
\newcommand{\sle}{{\fname{E}}}
\newcommand{\prj}{{\fname{P}}}
\newcommand{\rips}{{\fname{R}}}
\newcommand{\cech}{\check{\fname{C}}}

\newcommand{\ml}{\fname{ML}}
\newcommand{\slc}{\fname{SL}}

\newcommand{\defn}[1]{\textbf{#1}}

\newcommand{\fat}[1]{\mathds{#1}}

\newcommand{\RR}{\fat{R}}
\newcommand{\RRplus}{\RR_{_{\geq 0}}}

\newcommand{\id}[1]{\mathrm{id}_{#1}}
\newcommand{\down}[1]{#1\!\downarrow}

\newcommand{\diam}[1]{\mathtt{diam}\!\left(#1\right)}
\newcommand{\dist}[2]{\mathtt{dist}\!\left(#1,#2\right)}

\newcommand{\injenv}[1]{\varepsilon(#1)} 


\newcommand{\THEN}{\;\Rightarrow\;}
\newcommand{\IFF}{\;\Leftrightarrow\;}


\newcommand{\inv}{^{{\scriptscriptstyle -1}}}
\newcommand{\set}[2]{\left\{#1\,\Big|\,#2\right\}}

\newcommand{\height}[1]{\left\Vert #1 \right\Vert}
\newcommand{\minset}[1]{\mathtt{Min}(#1)}

\usepackage{cleveref}

\theoremstyle{plain}
\newtheorem{theorem}{Theorem}
\newtheorem{proposition}[theorem]{Proposition}
\newtheorem{lemma}[theorem]{Lemma}
\newtheorem{corollary}[theorem]{Corollary}

\theoremstyle{definition}
\newtheorem{definition}[theorem]{Definition}

\theoremstyle{remark}
\newtheorem{example}[theorem]{Example}
\newtheorem{remark}[theorem]{Remark}

\begin{document}

\title{Functorial Hierarchical Clustering with Overlaps}
\author{Jared Culbertson}

\address{Sensors Directorate, Air Force Research Laboratory, 2241 Avionics Circle, Building 620
Wright--Patterson Air Force Base, Ohio 45433-7302, USA.}
\email{jared.culbertson@us.af.mil}

\author{Dan P. Guralnik}
\address{Electrical \& Systems Engineering Dept., University of Pennsylvania, 200 S. 33rd st., 203 Moore Building, Philadelphia, Pennsylvania 19104-6314, USA.}
\email[Corresponding author]{guraldan@seas.upenn.edu}

\author{Peter F. Stiller}
\address{Department of Mathematics, MS3368, Texas A\&M University, College Station, Texas 77843-3368, USA.}
\email{stiller@math.tamu.edu}

\maketitle

\begin{abstract}
This work draws inspiration from three important sources of research on dissimilarity-based clustering and intertwines those three threads into a consistent principled functorial theory of clustering.
Those three are the overlapping clustering of Jardine and Sibson, the functorial approach of Carlsson and M\'{e}moli to partition-based clustering, and the Isbell/Dress school's study of injective envelopes.
Carlsson and M\'{e}moli introduce the idea of viewing clustering methods as functors from a category of metric spaces to a category of clusters, with functoriality subsuming many desirable properties. 
Our first series of results extends their theory of functorial clustering schemes to methods that allow overlapping clusters in the spirit of Jardine and Sibson. This obviates some of the unpleasant effects of chaining that occur, for example with single-linkage clustering. We prove an equivalence between these general overlapping clustering functors and projections of weight spaces to what we term clustering domains, by focusing on the order structure determined by the morphisms. As a specific application of this machinery, we are able to prove that there are no functorial projections to cut metrics, or even to tree metrics. Finally, although we focus less on the construction of clustering methods (clustering domains) derived from injective envelopes, we lay out some preliminary results, that hopefully will give a feel for how the third leg of the stool comes into play.
\end{abstract}

\bigskip
\noindent {\bf Keywords:} hierarchical clustering, clustering with overlaps, functorial clustering, clustering domain, injective envelope, non-expansive map, sieving functor, cut metric, tree metric, A-space
\bigskip

\noindent {\bf 2010 MSC:} 62H30, 51K05, 52A01, 18B99

\section{Introduction}

Problems surrounding data clustering have been studied extensively over the last forty years. 
Clustering stands as an important tool for analyzing and revealing the often hidden structure in data (and in today's big data) coming from fields as diverse as biology, psychology, machine learning, sociology, image understanding, and chemistry.
Among the earliest systematic treatments of clustering theory was that of Jardine and Sibson in 1971 \cite{js-1971}.
They laid out important desiderata for overlapping clustering methods and provided a relatively efficient algorithm for their so-called $B_k$ clustering which allowed overlapping clusters with no more than $k-1$ points in any overlap.
Since then, there have been several distinct directions of research in clustering theory, with only modest linkage between the methods of researchers pursuing different paths.

The classical work of Jardine and Sibson was followed by other similarly comprehensive works such as Everitt~\cite{everitt-2011}.
Further theoretical work on these mostly classical methods was also done by Kleinberg~\cite{kleinberg} and Carlsson and M\'{e}moli~\cite{cm-2010, cm-2013}.
Kleinberg in particular showed the incompatibility of a relatively simple set of desirable axioms for any partition based clustering method.
Carlsson and M\'{e}moli in turn introduced categorical language into partition based clustering and showed that single-linkage was (up to a scaling) the only functorial method satisfying all their axioms (which included the notions of representability and excisiveness) in the category of finite metric spaces with non-expansive maps.  

In another direction, work on computing phylogenetic trees inspired a seminal paper by Bandelt and Dress~\cite{bandelt-dress} on split decompositions of metrics.
This line of research was continued with investigations into split systems and cut points of injective envelopes of metric spaces.
Representative papers include \cite{dhm-2001} and \cite{dmsw-2013}.
While not explicitly clustering methods, these methods are quite similar in spirit to stratified/hierarchical clustering schemes.
In this genre, we might also add the classification of the injective envelopes of six-point metric spaces by Sturmfels and Yu \cite{sturmfels-yu}.
Bandelt and Dress have also had a large influence on another field as a result of their work on weak hierarchies~\cite{bd-1989,bd-1994}.
This led to work by Diatta, Bertrand, Barth\'{e}lemy, Brucker, and others on indexed set systems (see, e.g., \cite{bertrand-2000}, \cite{diatta-2007}, \cite{bd-2014}).
Another interesting development in this area is the work by Janowitz on ordinal clustering \cite{janowitz}.

Recently, with the emergence of the new field of topological data analysis (TDA), work has been done on topologically-based clustering methods.
This includes the Mapper algorithm by Singh, M\'{e}moli, and Carlsson~\cite{smc-2007}, as well as work on persistence-based methods \cite{cosg-2013} and Reeb graphs \cite{hrpbw-2012}.

Meanwhile, most users of clustering methods default either to a classical linkage-based clustering method (such as single-linkage or complete linkage) or to more geometrically-based methods like $k$-means.
Unfortunately, the wide array of clustering theories has had little impact on the actual practice of clustering.
Simply put, the gap between theory and efficient practice has been hard to bridge.

In this paper we draw inspiration from three of the sources mentioned above, and strive to intertwine those three threads into a consistent principled functorial theory of dissimilarity-based clustering.
Those three are the overlapping clustering of Jardine and Sibson, the functorial approach of Carlsson and M\'{e}moli, and the Dress school approach to clustering, via injective envelopes, which were independently discovered by Isbell and Dress.
This paper intends to fuse these approaches.
Our starting point is the paper of Carlsson and M\'{e}moli~\cite{cm-2013}  which introduces the idea of viewing clustering methods as functors
from a category of metric spaces to a category of clusters.
Many desirable properties of a clustering method are subsumed in functoriality when the morphisms are properly chosen.
Here the relevant morphisms under which the particular method is functorial can be viewed as giving restrictions on the allowable data processing operations - restrictions that impose consistency constraints across related data sets.
One of our first goals is to extend their theory of functorial clustering schemes to methods that allow overlapping clusters in the spirit of Jardine and Sibson, and in so doing obviate some of the unpleasant effects of chaining occurring in some linkage-based methods.  (See \cite{cm-2010}, Remark $16$, where Carlsson and M\'{e}moli discuss the chaining effects in single-linkage clustering and propose an alternate solution based on explicitly considering density.)
Rather than relying on chaining to overcome certain technical problems, we accept overlapping clusters.
This leads to a much richer set of possible clustering algorithms.   

Finally, although in this paper we focus less on the construction of clustering methods (clustering domains) derived from injective envelopes, we do in the final section lay out some preliminaries, that hopefully will enable the reader to get a feel for how the geometry of injective envelopes comes into play. In addition, lest the reader think we are all theory and no practice, we mention our ongoing algorithmic work on efficient implementations of some of these clustering schemes, along the lines of what has already been done for $q$-metrics by Segarra {\it et al.}~\cite{Segarra_et_al-q_metric_projections} and for dithered maximal linkage clustering by Gama {\it et al.}~\cite{Gama_Segarra_Ribeiro-dithering, gsr}.

\subsection{Weight Categories}
\begin{definition} Let $\wghts$ be the category of \defn{finite sets with weights}, whose objects have the form $(X,u)$ with $X$ a finite non-empty set and $u$ a symmetric non-negative map $u\colon X\times X\to\RR$, $(x,y)\mapsto u_{xy}$ satisfying $u_{xx}=0$ for all $x\in X$. A morphism $f\colon (X,u)\to(Y,v)$ is a set map $f\colon X\to Y$ such that $v_{f(x)f(x')}\leq u_{xx'}$; these will be referred to as \it{non-expansive maps}.
\end{definition}
For a fixed finite set $X$, we can define a local order structure by pointwise dominance on the full subcategory $\wghts_X$ of $\wghts$ consisting of weights on $X$.
In order to simplify notation, when the underlying set $X$ is fixed we will often refer to the object $(X, u) \in \wghts_X$ only by the weight function $u$ and state that $u \in \wghts_X$.
The set of objects of $\wghts_X$ is a partially ordered set (poset) with the ordering given by
\begin{displaymath}
	u\leq v \text{ if } u_{xy}\leq v_{xy} \text{ for all } x,y \in X.
\end{displaymath}

It will be convenient to denote, for any subset $U$ of the objects of $\wghts_X$,
\begin{displaymath}
	\down{U}:=\set{w\in\wghts_X}{\exists {u\in U}\;w\leq u}
\end{displaymath}
and in the case of the singleton set $\{u\}$, we will often write $\down{u}$ for $\down{\{u\}}$. For any subcategory $\catname{C}$ of $\wghts$, we will use the analogous notation $\catname{C}_X$ for $\catname{C} \cap \wghts_X$, which on objects will be the intersection of the objects of $\catname{C}$ and $\wghts_X$ with the morphisms of $\catname{C}$. 
Also, for every map of finite sets $f\colon X\to Y$ and $w\in\wghts_Y$ we define $f^\ast(w)\in\wghts_X$ to be the pullback of the weight $w$ to the set $X$, more explicitly, $f^\ast(w)_{xy}:=w_{f(x)f(y)}$. This notation allows another perspective on morphisms in $\wghts$: the map of finite sets $f\colon X\to Y$ induces a morphism $(X,u)\to(Y,v)$ in $\wghts$ if and only if $f^\ast(v)\leq u$. 

\begin{definition} We define a \defn{weight category} to be any full subcategory $\catname{C}$ of $\wghts$ such that $f^\ast\catname{C}_Y\subseteq\catname{C}_X$ for any map of finite sets $f\colon X\to Y$. We say that $\catname{C}$ is closed under pullbacks, and will sometimes refer to this as the pullback property. Note that this condition is only on the objects of the category $\catname{C}$. 
\end{definition}

Note that the category $\mets$ of finite metric spaces and non-expansive maps, first considered by Isbell~\cite{isbell} for arbitrary metric spaces, is a weight category in this sense. In this exposition, by metric we will always mean semimetric; {\em i.e.,} we will not require that distinct points have nonzero distance. 

\begin{remark}\label{rem:permutation invariance of weight categories} Since the pullback property allows arbitrary maps of finite sets, any permutation mapping $f\colon X\to X$ satisfies $f^\ast\catname{C}_X=\catname{C}_X$. In other words, weight categories are invariant under permutations. In this way, the pullback property is a generalization of what Jardine and Sibson term ``label freedom'' in~\cite{js-1971}, pp. 83--84.
\end{remark}

\begin{definition}\label{def:fibered} A functor $\fname{G}\colon\catname{C}\to\catname{D}$ between weight categories will be said to be \defn{fibered}, if the diagram

\begin{displaymath}
		\begin{tikzpicture}
			\node (c) at (0,0) {$\catname{C}$};
			\node (d) at (3,0) {$\catname{D}$};
			\node (set) at (3, -2) {$\sets$};
			
			\draw[->, above] (c) to node {$\fname{G}$} (d);
			\draw[->, below left] (c) to node {${\fname F}_{\catname{C}}$} (set);
			\draw[->, right] (d) to node {$\fname{F}_{\catname{D}}$} (set);
		\end{tikzpicture}
\end{displaymath}

commutes, where $\fname{F}_{\catname{C}}$ and $\fname{F}_{\catname{D}}$ are the respective forgetful functors. For such a functor $\fname G$ and any finite set $X$, we have an associated functor $\fname{G}_X\colon {\catname{C}}_X \to {\catname{D}}_X$. In other words, on objects, fibered functors fix the underlying set and on morphisms, fix the underlying set map. Intuitively, a fibered functor acts on weighted spaces only by deforming the weight structure. 
\end{definition}

\begin{remark}\label{rem:fibered}
This entire exposition could be reformulated in the setting of fibered categories and functors, where the fiber over a fixed finite set has the structure of a directed complete poset, but we did not view the additional advantage in abstraction and simplicity of definitions worth the overhead cost of framing the work in those terms. The fibration of a weight category over $\sets$ is straightforward. In \Cref{sec:a_spaces}, we give a hint at where this generality would be important: we explore a subcategory of $\wghts$ with a restricted set of morphisms. The important thing in this type of setting is that the cartesian morphisms in each fiber provide this directed complete poset structure. For $\wghts$, the categories considered in \Cref{sec:a_spaces}, as well as for the categories considered by Carlsson and M\'{e}moli~\cite{cm-2013}, this amounts to noting that the morphisms with underlying map the identity are cartesian.
\end{remark}

\section{Clustering with Overlaps: Sieves}\label{section:sieves}

Let $\covs$ be the category of coverings, with an object a pair $(X,C)$ consisting of a finite set $X$ together with a cover $C$ of $X$. A morphism $f\colon (X,C) \to (Y,D)$ is a set map $f\colon X \to Y$ such that the cover $f^{-1}(D)$ of $X$ is refined by $C$.  

\begin{definition}\label{def:flag cover}
Given a non-empty finite set $X$ we define a \defn{non-nested flag cover} of $X$ to be a cover $C$ of $X$ additionally satisfying:
\begin{enumerate}
	\item[(i)] for all $A,B \in C$ with $A \subseteq B$, we have $A = B$;
	\item[(ii)] the abstract simplicial complex $K_C$ with vertices corresponding to elements of $X$ and faces all subsets of the sets in $C$ is a flag complex, {\em i.e.,} if all of the edges of a given simplex are in $K_C$, then that simplex itself is also in $K_C$. 
\end{enumerate} 

The category $\covsfl$ is then the full subcategory of $\covs$ where the covers are required to be non-nested flag covers.
\end{definition}

\begin{remark}\label{rem:local subcat of covf}
For a non-nested flag cover $C$ of $X$, observe that the sets in $C$ provide the maximal simplices in the simplicial complex defined in $(ii)$ above. Given a fixed finite set $X$, we will denote the full subcategory of $\covsfl$ consisting of covers on $X$ by $\covsfl(X)$.  Notice that, in particular, every partition of $X$ is a flag cover of $X$.
\end{remark}
\begin{remark}\label{rem:flagification}
The inclusion functor $\covsfl \to \covs$, which simply forgets that a given cover is a non-nested flag cover,  has a left adjoint given by flagification. Formally, any cover $\tilde{C}$ of a finite set $X$ has a unique associated non-nested flag cover: $\tilde{C}$ can be ``flagified'' to a non-nested flag cover $C$ in a minimal way. Explicitly, $\tilde{C}$ will refine the non-nested flag cover $C$ and $C$ will refine any other non-nested flag cover which $\tilde{C}$ refines. Algorithmically, this is very simple and amounts to adding in any subsets required by the flag condition ({\em i.e.}, adding a subset $A$ if all of the proper subsets of $A$ are already in $\tilde{C}$, then iterating this process) and then removing any subsets from the cover that are entirely contained in another subset. In principle, this construction would seemingly be important for defining the morphisms in $\covsfl$. That is, from a categorical point of view, the refinement criterion for a map of finite sets $f\colon X \to Y$  to be a $\covsfl$-morphism $f\colon (X, C) \to (Y,D)$ is not natural in the sense that the cover $f^{-1}(D)$ of $X$ might contain nesting and so $(X, f^{-1}(D))$ is not an object in $\covsfl$. The more appropriate condition involves first applying the flagification functor to $(X, f^{-1}(D))$ and then verifying the refinement condition. However, it is straightforward to see that these are equivalent: if $C$ is a flag cover on $X$, then $f^{-1}(D)$ is refined by $C$ if and only if the flagification of $f^{-1}(D)$ is refined by $C$. 
\end{remark}

\begin{definition}\label{def:sieve} A \defn{sieve} on a finite set $X$ is a function $\theta\colon \RR_{\geq 0} \to \covsfl(X)$ such that:
\begin{enumerate}
	\item[(i)] If $t_1 < t_2$ then $\theta (t_1)$ refines $\theta (t_2)$;
	\item[(ii)] For any $t$, there is an $\varepsilon>0$ such that $\theta (t^\prime )=\theta_X (t)$ for all $t^\prime \in[t,t+\varepsilon )$;
	\item[(iii)] There exists $t \in \RR_{\geq 0}$ such that $\theta (t)$ is the trivial cover $\{X\}$.
\end{enumerate}
We say that a sieve $\theta$ on $S$ is \defn{proper}, if $\theta(0)$ is the covering by singletons. In order to curb the proliferation of parentheses, we will also use the notation $\theta_t$ for $\theta(t)$. 
\end{definition}

\noindent Sieves are the obvious generalization of {\em dendrograms} (see \cite{cm-2010}, Section $4.1$ for a modern, self-contained discussion of dendrograms), which satisfy the same conditions as sieves, but take values in the more restricted set of partitions of $X$ rather than coverings of $X$. Jardine and Sibson~(\cite{js-1971}, p. $80$) also considered a similar construction to our sieves, though taking values in symmetric reflexive relations on $X$. They also did not focus on the morphisms between sieves as we do, starting with the following definition.

\begin{definition}\label{def:sieve category}
We define the \defn{category of sieves}, $\sieves$, as the category of pairs $(X,\theta)$, where $\theta \colon \RRplus \to \covsfl(X)$ is a sieve on a finite set $X$. 
The morphisms in $\sieves$ are an extension of the morphisms of $\covsfl$; that is, a map of finite sets $f\colon X \to Y$ is a morphism of sieves $(X,\theta) \to (Y,\psi)$ if for every $t \in \RR_{\geq 0}$, $\theta_t$ refines $f^{-1}(\psi_t)$. In other words, for each $t \in \RRplus$, we have a functor $\sieves \to \covsfl$. Note that just as in \Cref{rem:flagification}, in general, $(X, f^{-1}(\psi_t))$ might be an object of the ambient category $\covs$ and not $\covsfl$ due to nesting, but nonetheless, this condition is equivalent to a more complex condition involving first flagifying $f^{-1}(\psi_t)$. 
\end{definition}

Just as in $\wghts$, the categories $\covsfl$ and $\sieves$ are also fibered over $\sets$ in a natural way. We will restrict to fibered functors in our discussion, which in particular ensures that underlying sets and mappings are fixed by the functors. Any fibered functor with values in $\sieves$ then provides a hierarchical method of clustering that respects the constraints imposed by the morphisms in $\wghts$ and produces potentially overlapping clusters at any fixed scale. For clarity in the exposition, we make the following definition. 

\begin{definition}
A \defn{sieving functor} is a fibered functor $\wghts \to \sieves$. One of the primary purposes of this paper is to characterize a particularly ``nice'' class of sieving functors, in a way that will be made precise below (see \Cref{thm:correspondence}). 
\end{definition}

\begin{example}[Rips Sieving]\label{ex:Rips sieve}
Let $(X, u) \in \wghts$ and let $\delta\geq 0$. Recall that the Rips complex at resolution $\delta$ on $(X,u)$ is the abstract simplicial complex $K_u(\delta)$ with vertex set $X$, where a subset $A\subseteq S$ forms a face of $K_u(\delta)$ if and only if $\diam A\leq\delta$ (see~\cite{edelsbrunner-harer}, Section III.2 for a more detailed treatment of the Rips complex and other related ideas). If $M_u(\delta)$ is the collection of maximal simplices of $K_u(\delta)$, then $M_u(\delta)$ forms a non-nested flag cover of $X$. Define $\fname{R}\colon \wghts \to \sieves$ by 
\begin{displaymath}
    \fname{R}(X, u)_t = (X, M_u(t)),
\end{displaymath}
which we will call the Rips sieving functor (elsewhere, including in \cite{cghs-consistency}, this functor is denoted $\ml$ and called maximal-linkage). On morphisms, $\fname{R}$ sends a non-expansive map of weight spaces to the same underlying set function. It is straightforward to see that this gives a $\sieves$-morphism.
\end{example}

\begin{proposition}\label{prop:equivalence}
    The category $\wghts$ is equivalent to $\sieves$.
\end{proposition}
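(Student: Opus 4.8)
The plan is to exhibit the Rips sieving functor $\rips$ of \Cref{ex:Rips sieve} as one half of an equivalence by constructing an explicit candidate inverse $\fname{W}\colon\sieves\to\wghts$ and checking that the two composites are the respective identities; in fact this yields an isomorphism of categories, which is stronger than claimed. For a sieve $(X,\theta)$ I would set
\begin{displaymath}
    u^\theta_{xy} \defeq \min\set{t\in\RRplus}{x\text{ and }y\text{ lie in a common member of }\theta_t}.
\end{displaymath}
The first task is to see this is well defined and lands in $\wghts$. Writing $T_{xy}$ for the set on the right, monotonicity (condition (i) of \Cref{def:sieve}) makes $T_{xy}$ upward closed and condition (iii) makes it non-empty; right-continuity (ii) then forces it to be a closed ray $[s,\infty)$, since for $s=\inf T_{xy}$ an $\varepsilon$ with $\theta_{t'}=\theta_s$ on $[s,s+\varepsilon)$ together with any $t'\in T_{xy}\cap[s,s+\varepsilon)$ shows $s\in T_{xy}$. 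Thus the minimum is attained, and symmetry, non-negativity, and $u^\theta_{xx}=0$ (any $x$ lies in some member of $\theta_0$) are immediate, so $u^\theta\in\wghts_X$.

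Next I would check functoriality of $\fname{W}$. Given a sieve morphism $f\colon(X,\theta)\to(Y,\psi)$ and points $x,x'\in X$, set $t=u^\theta_{xx'}$, so $x,x'$ lie in a common $A\in\theta_t$; since $\theta_t$ refines $f\inv(\psi_t)$, we have $A\subseteq f\inv(D)$ for some $D\in\psi_t$, whence $f(x),f(x')\in D$ and $u^\psi_{f(x)f(x')}\leq t=u^\theta_{xx'}$. Thus $f$ is non-expansive and $\fname{W}$ is a fibered functor. For the composite $\fname{W}\circ\rips$, a direct computation gives $u^{\rips(u)}_{xy}=\min\set{t}{u_{xy}\leq t}=u_{xy}$, using that in the flag cover $M_u(t)$ the points $x,y$ share a member exactly when $\{x,y\}$ is an edge of the Rips complex, i.e.\ when $u_{xy}\leq t$. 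Hence $\fname{W}\circ\rips=\id{\wghts}$ on objects, and since both functors fix the underlying set map, also on morphisms.

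The crux is the other composite $\rips\circ\fname{W}=\id{\sieves}$, namely the identity $M_{u^\theta}(t)=\theta_t$ for every $t$. Here the flag hypothesis does all the work: in a non-nested flag cover the members are precisely the maximal simplices of the associated flag complex (\Cref{rem:local subcat of covf}), so such a cover is completely determined by its $1$-skeleton, the set of pairs $\{x,y\}$ sharing a member of $\theta_t$. By construction this $1$-skeleton equals $\set{\{x,y\}}{u^\theta_{xy}\leq t}$, which is exactly the edge set of the Rips complex $K_{u^\theta}(t)$; since the Rips complex is itself flag, $K_{u^\theta}(t)$ and the flag complex of $\theta_t$ are two flag complexes with equal $1$-skeleta, hence equal, and so have the same maximal simplices, giving $M_{u^\theta}(t)=\theta_t$. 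I expect this step---recovering an entire cover at every scale from purely pairwise data---to be the main obstacle, and the flag condition built into \Cref{def:sieve} is exactly what makes it go through. Finally, because $\rips$ and $\fname{W}$ fix underlying sets and maps and are mutually inverse on objects, they are automatically inverse bijections on hom-sets, establishing the asserted equivalence (indeed isomorphism) of categories.
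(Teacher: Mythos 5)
Your proposal is correct and follows essentially the same route as the paper: you construct the same inverse functor (the paper's $\fname{J}$, with the identical formula for $u_\theta$), use condition (ii) of \Cref{def:sieve} for well-definedness of the minimum, and verify that both composites with $\rips$ are identities. The only difference is that you write out in full the steps the paper dismisses as ``straightforward to check,'' correctly identifying the flag condition and the determination of a non-nested flag cover by its $1$-skeleton as the crux of $\rips\circ\fname{W}=\id{\sieves}$.
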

\begin{proof}
Given a sieve $(X, \theta)$, define a weight space $(X, u_\theta)$ pointwise with $x,y \in X$ by
    \begin{displaymath}
        u_\theta(x,y) = \min\set{t \in \RRplus}{\exists A \in \theta_t, \{x,y\} \subseteq A}.
    \end{displaymath}
At first glance, it might appear that we need to consider the infimum rather than the minimum, but condition $(ii)$ in \Cref{def:sieve} ensures that $u_\theta$ is well-defined. This assignment extends to a functor $\fname{J}\colon \sieves \to \wghts$, and it is straightforward to check that $\fname{J}\circ\fname{R}, \fname{R}\circ\fname{J}$ are the respective identity functors. 
\end{proof}

\begin{remark}\label{rem:rips}
This proposition shows that Rips sieving is the canonical way of producing sieves from weight spaces, in the sense that a weight space can be recovered from its Rips sieve. The result extends the connection noted by Carlsson--M\'{e}moli between the category of ultrametrics (\Cref{def:ultrametrics}) and the category of dendrograms; indeed our proof is a reworking of theirs in this new context (see \cite{cm-2010}, Theorem $9$).  From this perspective, a natural question is how to characterize sieving functors that factor through Rips sieving, {\em i.e.}, sieving functors $\fname{C}$ which can be realized as a composition $\fname{C} = \rips \circ \fname{P}$, for some functor $\fname{P}$. \Cref{thm:correspondence} below shows one result in this direction. 
\end{remark}

\begin{example}[Single-Linkage Sieving]\label{ex:SL sieve}
Let $(X, u) \in \wghts$ and $K_u(\delta)$ as in \Cref{ex:Rips sieve}. In this case, rather than look at maximal simplices, let $C_u(\delta)$ be the connected components of $K_u(\delta)$. Then we can define the single-linkage sieving functor $\slc\colon \wghts \to \sieves$ by
\begin{displaymath}
	\slc(X,u)_t = (X, C_u(t)),
\end{displaymath}
with morphisms being sent to the same underlying set maps. This is equivalent to the many alternative well-known characterizations of single-linkage clustering, for instance in \cite{sibson-73,Gower-Ross:mst,cm-2013}. 
\end{example}

\begin{example}[\v{C}ech Sieving]\label{ex:cech sieve}
If $(X, u) \in \wghts$, define a graph $G_u(\delta)$ with vertices the elements of $X$ and an edge $xy$ if there exists an element $z \in X$ with $u(x,z), u(z,y) \leq \delta$. We then define $\cech\colon \wghts \to \sieves$ to be the assignment which sends any $t \in \RRplus$ to the set of maximal cliques of $G_u(t)$. Again in this case, morphisms are sent to the same underlying set maps.
\end{example}

Many other examples can be derived from the clustering functors introduced in our work with Hansen in \cite{cghs-consistency}. 

\section{Clustering Domains}\label{section:clustering domains}
\begin{definition}\label{def:clustering domain} We say that a weight category $\catname{D}$ is a \defn{clustering domain} if  for any non-empty finite set $X$ and any $w \in \wghts_X$, the set $\catname{D}_X \cap \down{w}$ is non-empty and $\sup$-closed in $\wghts_X$. That is, 
\begin{displaymath}
	\left(X, \sup \{ u \in {\catname{D}_X\cap\down{w}}\}\right) \in \catname{D},
\end{displaymath}
for all $w \in \wghts_X$. 
\end{definition}

The importance of sup-closure was recognized as early as~\cite{js-1971}, in their discussion of optimality (p. $85$), and it will often be convenient to satisfy a stronger (but more easily verified) condition related to the topology on $\wghts$ induced by the identification of the objects of $\wghts_X$ with the nonnegative real orthant $\RR_{\geq 0}^{\binom{|X|}{2}}$.

\begin{proposition}\label{prop:closed_max_closed}
Let $\catname{C}$ be a weight category. If the set of objects of $\catname{C}_X$ is closed as a subset of $\RR_{\geq 0}^{\binom{|X|}{2}}$ and $\catname{C}_X$ is closed under taking finite maxima, then $\catname{C}$ is a clustering domain.  
\end{proposition}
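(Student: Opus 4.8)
The plan is to check the two defining properties of a clustering domain from \Cref{def:clustering domain} directly, for an arbitrary non-empty finite set $X$ and weight $w \in \wghts_X$: namely that $S \defeq \catname{C}_X \cap \down{w}$ is non-empty, and that $\sup S \in \catname{C}_X$. Throughout I would treat objects of $\wghts_X$ as points of $\RR_{\geq 0}^{\binom{|X|}{2}}$ and read both $\leq$ and the supremum coordinatewise.

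To see $S \neq \emptyset$, I would exhibit the zero weight $\zero$ on $X$ as an element of it. Since $\catname{C}$ is a non-empty weight category, choose an object $(Z,w') \in \catname{C}$ and a point $z \in Z$, and let $c \colon X \to Z$ be the constant map at $z$. Then $c^\ast(w')_{xy} = w'_{c(x)c(y)} = w'_{zz} = 0$, so $c^\ast(w') = \zero$, and the pullback property forces $\zero \in \catname{C}_X$. As $\zero \leq w$, we get $\zero \in S$.

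For sup-closure, first note that every $u \in S$ obeys $u \leq w$, so the coordinatewise supremum $s \defeq \sup S$ satisfies $s_{xy} \leq w_{xy}$ for all pairs; since symmetry, non-negativity, and the vanishing diagonal all survive the supremum, $s$ is a genuine element of $\wghts_X$ lying in $\down{w}$. The crux is then to show $s \in \catname{C}_X$. Here I would exploit that $X$ is finite, so $s$ is determined by its finitely many coordinates, and approximate it by finite maxima drawn from $S$: for each $n$ and each pair $\{x,y\}$ pick $u^{(x,y,n)} \in S$ with $u^{(x,y,n)}_{xy} > s_{xy} - 1/n$, and let $u^{(n)}$ be the pointwise maximum of these finitely many elements. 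The finite-max hypothesis gives $u^{(n)} \in \catname{C}_X$, while $s_{xy} - 1/n < u^{(n)}_{xy} \leq s_{xy}$ on every pair shows $u^{(n)} \to s$. Closedness of $\catname{C}_X$ in $\RR_{\geq 0}^{\binom{|X|}{2}}$ then yields $s \in \catname{C}_X$.

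I expect this approximation step to be the main obstacle, and the only place where both hypotheses are genuinely needed: finiteness of $X$ makes the supremum a coordinatewise operation over finitely many coordinates, so that a single finite maximum can $1/n$-approximate $s$ in every coordinate at once, and closedness licenses the passage from these finite maxima --- whose membership in $\catname{C}_X$ is guaranteed by the max hypothesis --- to their limit $s$. The non-emptiness claim and the bound $s \leq w$ are routine by comparison.
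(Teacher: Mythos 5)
Your proof is correct and the sup-closure argument is essentially identical to the paper's: for each pair $\{x,y\}$ choose an element of $S$ nearly attaining the supremum in that coordinate, take the finite maximum (which lies in $\catname{C}_X$ by hypothesis), and conclude by closedness that $\sup S \in \catname{C}_X$. You additionally verify non-emptiness of $\catname{C}_X \cap \down{w}$ by pulling back along a constant map to get the zero weight, a point the paper's proof leaves implicit (and which requires, as you note, that $\catname{C}$ itself be non-empty).
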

\begin{proof}
If $w \in \wghts_X$ and $S\subset\catname{C}_X$ satisfies $d\leq w$ for all $d\in S$, then $v:=\sup_{d\in S}(d)$ has $v\leq w$ and we consider the following procedure. For every pair $x,y\in X$ and $\varepsilon>0$ find $u^\varepsilon_{x,y}\in S$ such that $v(x,y)-u^\varepsilon_{x,y}(x,y)<\varepsilon$; then $u^\varepsilon:=\max_{x,y\in X}u^\varepsilon_{x,y}\in\catname{C}_X$ and satisfies $\left\Vert v-u^\varepsilon\right\Vert_\infty<\varepsilon$, and so we conclude that $v$ is in the closure of $\catname{C}_X$.
\end{proof}

\subsection{Projections}\label{section:projections}
The reason to define clustering domains is that, in some sense, clustering maps ``live'' on them.
\begin{definition}[Canonical Projection]\label{def:can_projection} Given a clustering domain $\catname{D}$, we define the \defn{canonical projection} $\prj_\catname{D}\colon \wghts\to\wghts$ to be the fibered functor defined by
\begin{displaymath}
	\prj_\catname{D}(X,w)=\left(X,\sup\{u\in\catname{D}_X\cap\down{w}\}\right).
\end{displaymath}
Recall that the fibered condition implies that $\prj_\catname{D}$ sends a morphism in $\wghts$ to the morphism with the same underlying set map. Also $\left(X,\sup\{u\in\catname{D}_X\cap\down{w}\}\right)$ is an object in $\catname{D}$ since $\catname{D}$ is a clustering domain. 
\end{definition}
We need to verify the assertions made in this definition regarding properties of $\prj_\catname{D}$.
\begin{proposition}[Properties of the Canonical Projection] Suppose $\catname{D}$ is a clustering domain and $\prj=\prj_\catname{D}$ is the associated canonical projection. Then $\prj$ is a fibered endofunctor of $\wghts$ satisfying the additional properties:
\begin{enumerate}
	\item $\prj\circ \prj=\prj$,
	\item $\prj_Xw\leq w$ for all sets $X$ and $w\in\wghts_X$.
\end{enumerate}
\end{proposition}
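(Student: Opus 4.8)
The plan is to verify the three assertions in order of increasing difficulty, treating the claim that $\prj$ is a fibered endofunctor as the substantive part. Property $(2)$ I would dispatch first: by definition $\prj_X w=\sup\{u\in\catname{D}_X\cap\down{w}\}$, and every $u$ in the set $\catname{D}_X\cap\down{w}$ satisfies $u\leq w$ by virtue of lying in $\down{w}$; thus $w$ is an upper bound for this set, and since the supremum is its least upper bound (computed pointwise in $\RR_{\geq 0}^{\binom{\card{X}}{2}}$), we get $\prj_X w\leq w$ immediately.

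For idempotence I would first invoke the clustering-domain hypothesis, which is exactly what guarantees that $v:=\prj_X w$ is itself an object of $\catname{D}_X$. Granting this, $v\in\catname{D}_X\cap\down{v}$ since $v\leq v$, and every element of $\catname{D}_X\cap\down{v}$ is $\leq v$; hence $v$ is the maximum of that set and $\prj_X v=\sup\{u\in\catname{D}_X\cap\down{v}\}=v$. This yields $\prj\circ\prj=\prj$ on objects, and since both composites act as the identity on underlying set maps, they agree as functors.

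The hard part will be functoriality: showing that a morphism $f\colon(X,u)\to(Y,v)$ in $\wghts$, i.e.\ a set map with $f^\ast v\leq u$, remains a morphism $(X,\prj_X u)\to(Y,\prj_Y v)$, equivalently $f^\ast(\prj_Y v)\leq\prj_X u$. My approach rests on two observations. First, because the pullback is pointwise evaluation, $f^\ast(w)_{xy}=w_{f(x)f(y)}$, it commutes with arbitrary (pointwise) suprema, so $f^\ast(\prj_Y v)=\sup\{f^\ast b \mid b\in\catname{D}_Y\cap\down{v}\}$. Second, the pullback-closure property defining the weight category $\catname{D}$ gives $f^\ast b\in\catname{D}_X$ for every $b\in\catname{D}_Y$. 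Combining these, for each $b\in\catname{D}_Y\cap\down{v}$ the monotonicity of $f^\ast$ together with $b\leq v$ and $f^\ast v\leq u$ yields $f^\ast b\leq f^\ast v\leq u$, so $f^\ast b\in\catname{D}_X\cap\down{u}$ and therefore $f^\ast b\leq\prj_X u$. Taking the supremum over all such $b$ gives $f^\ast(\prj_Y v)\leq\prj_X u$, as needed.

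Finally, I would observe that since $\prj$ fixes every underlying set and every underlying set map, preservation of identities and of composites is automatic once each individual morphism has been shown to remain a morphism; this also makes $\prj$ manifestly fibered, and its image lands in $\catname{D}\subseteq\wghts$ by the clustering-domain hypothesis. The only genuine obstacle is thus the non-expansiveness check in the third paragraph, whose resolution hinges on the interplay between the pullback-closure of $\catname{D}$ and the fact that $f^\ast$ preserves suprema.
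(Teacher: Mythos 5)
Your proof is correct, and it is in fact more explicit than the paper's: the paper dismisses everything except the non-expansiveness check as ``immediate from the definition,'' whereas you spell out the contraction and idempotency arguments (both of which are right --- $w$ is an upper bound for $\catname{D}_X\cap\down{w}$, and sup-closure puts $\prj_X w$ back into $\catname{D}_X$ where it is the maximum of $\catname{D}_X\cap\down{\prj_X w}$). On the functoriality step your route differs mildly from the paper's. The paper pulls back the single weight $\prj_Y v$: sup-closure of $\catname{D}$ gives $\prj_Y v\in\catname{D}_Y$, the pullback property then gives $f^\ast(\prj_Y v)\in\catname{D}_X$, and the chain $f^\ast(\prj_Y v)\leq f^\ast(v)\leq u$ places it in $\catname{D}_X\cap\down{u}$, whence $f^\ast(\prj_Y v)\leq\prj_X u$ by definition of the supremum. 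You instead pull back each $b\in\catname{D}_Y\cap\down{v}$ individually and conclude via the observation that $f^\ast$, being evaluation at pairs, commutes with pointwise suprema. Both arguments rest on the same two pillars --- pullback-closure of $\catname{D}$ and monotonicity of $f^\ast$ --- but the paper's is a one-element argument that uses sup-closure of $\catname{D}_Y$ at this step, while yours confines sup-closure to the idempotency proof at the cost of the (easy, and correctly justified) extra lemma that $f^\ast$ preserves suprema. Either is fine; the paper's version is marginally shorter.
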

\begin{proof} Since the identities
\begin{displaymath}
	 \prj(\id{(X,w)})=\id{\prj(X,w)}\,,\quad
	 \prj(g\circ f)=\prj(g)\circ \prj(f)
\end{displaymath}
are immediate from the definition, one is left only to verify that $\prj(f)$ is a non-expansive map whenever $f\colon (X,u)\to(Y,v)$ is. Equivalently, we need to show that $f^\ast(\prj v)\leq \prj u$ holds whenever $f^\ast(v)\leq u$. By definition, $\prj v\leq v$ and we have:
\begin{displaymath}
	\prj v\leq v
		\IFF f^\ast(\prj v)\leq f^\ast(v)
		\THEN f^\ast(\prj v)\leq u
		\IFF f^\ast(\prj v)\in\down{u}
\end{displaymath}
Thus, $f^\ast(\prj v)\in\down{u}\cap f^\ast(\catname{D}_Y)\subseteq\down{u}\cap \catname{D}_X$, by the definition of a clustering domain. Finally, $f^\ast(\prj v)\leq \prj u$ by the definition of $\prj u$.
\end{proof}
In an attempt to construct more general sieving functors in the spirit of the previous proposition (and the similar conditions of Jardine and Sibson in~\cite{js-1971}, pp. 83--85) one might seek to introduce the following definition:
\begin{definition}[Projection]\label{def:projection} Let $\catname{C}$ be a weight category. We say that a fibered functor $\prj\colon \catname{C}\to\catname{C}$ is a \defn{projection}, if it satisfies:
\begin{itemize}
	\item{\bf Idempotency} $\prj\circ \prj=\prj$;
	\item{\bf Contraction} $\prj_Xw\leq w$ for all sets $X$ and $w\in\catname{C}_X$.
\end{itemize}
\end{definition}
\begin{remark}\label{functorial is order-preserving} The fact that $\prj$ is a fibered functor implies that for a fixed set $X$, the self-maps $\prj_X\colon \wghts_X\to\wghts_X$ are order-preserving. For all $w_1,w_2\in\wghts_X$ one has:
\begin{eqnarray*}
	w_1\leq w_2	&\IFF& \id{X}\in\mor{\wghts}{(X,w_2)}{(X,w_1)}\\
		&\THEN& \id{X}=\prj(\id{X})\in\mor{\wghts}{(X,\prj_Xw_2)}{(X,\prj_Xw_1)}\\
		&\IFF&	\prj_Xw_1\leq \prj_Xw_2
\end{eqnarray*}
\end{remark}

The following result was initially observed for single linkage hierarchical clustering in~\cite{Zhu_et_al-SLHC_properties}, however its categorification and generalization are new:
\begin{proposition}[Uniqueness of Projections]\label{prop:unique_projections} Let $\catname{C} \subseteq \wghts$ be a weight category. Then every weight category $\catname{D}\subseteq\catname{C}$ admits at most one projection $\prj\colon \catname{C}\to\catname{C}$ whose image coincides with $\catname{D}$.
\end{proposition}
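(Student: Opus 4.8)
The plan is to show that the action of any projection with image $\catname{D}$ is completely forced on objects, and then to observe that on morphisms a fibered functor carries no additional data. First I would fix a finite set $X$ and a weight $w\in\catname{C}_X$ and establish the characterization
\[
	\prj_X w \;=\; \max\left(\catname{D}_X\cap\down{w}\right),
\]
the maximum being taken in the poset $\wghts_X$. The point is that the right-hand side refers only to $\catname{D}$ and $w$, with no mention of the particular projection, so once the identity is proved uniqueness drops out immediately.

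The crucial step is the inequality $d\leq\prj_X w$ for every $d\in\catname{D}_X$ with $d\leq w$. Here I would combine two consequences of \Cref{def:projection}. First, order-preservation of $\prj_X$, already recorded in \Cref{functorial is order-preserving}, turns $d\leq w$ into $\prj_X d\leq\prj_X w$. Second, idempotency forces $\prj_X$ to fix $\catname{D}_X$ pointwise: since the image of $\prj$ is $\catname{D}$, any $d\in\catname{D}_X$ may be written $d=\prj_X u$, whence $\prj_X d=\prj_X\prj_X u=\prj_X u=d$. Chaining these gives $d=\prj_X d\leq\prj_X w$. Together with the contraction property $\prj_X w\leq w$ and the membership $\prj_X w\in\catname{D}_X$ (again because the image is $\catname{D}$), this exhibits $\prj_X w$ as the largest element of $\catname{D}_X\cap\down{w}$, as claimed.

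Uniqueness is then essentially automatic. Any two projections $\prj,\prj'\colon\catname{C}\to\catname{C}$ with image $\catname{D}$ must both realize the same maximum, so $\prj_X w=\prj'_X w$ for every $X$ and every $w\in\catname{C}_X$; hence they agree on objects. Because both are fibered, each sends a morphism $f\colon(X,u)\to(Y,v)$ to the morphism with the \emph{same} underlying set map $f$, reinterpreted as a morphism $(X,\prj_X u)\to(Y,\prj_Y v)$. Agreement on objects therefore forces agreement on morphisms as well, and I would conclude $\prj=\prj'$.

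As for the main obstacle, the argument is short and the only delicate point is an existence subtlety that must be handled rhetorically rather than mathematically. For a general weight category $\catname{D}$ the set $\catname{D}_X\cap\down{w}$ need not possess a maximum at all — this is precisely why sup-closure was isolated in the definition of a clustering domain. What saves the argument is that we never assert existence: we \emph{assume} a projection is given and show it must realize this maximum, so the maximum exists whenever a projection does, and is then unique. I would accordingly state the characterization as conditional on the given projection, so that no appeal to the clustering-domain hypothesis is required and the proposition holds for an arbitrary weight subcategory $\catname{D}\subseteq\catname{C}$.
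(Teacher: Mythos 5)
Your proof is correct and uses essentially the same ingredients as the paper's: idempotency together with the hypothesis that the image is $\catname{D}$ forces $\prj_X$ to fix $\catname{D}_X$ pointwise, and combining this with order-preservation (\Cref{functorial is order-preserving}) and contraction pins down $\prj_X w$; the paper simply runs this comparison directly between two given projections rather than through the intermediate formula $\prj_X w=\max\left(\catname{D}_X\cap\down{w}\right)$. Your packaging has the small added benefit of exhibiting the unique projection as the canonical one and of making explicit that the maximum exists whenever a projection does, which the paper only records afterwards in the corollary.
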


\begin{proof} Suppose $\prj,\fname{Q}$ are arbitrary projections with image $\catname{D}$. Fixing a non-empty finite set $X$ and applying the idempotency requirement, we have that for all $w\in\catname{C}_X$:
\begin{displaymath}
	w\in\catname{D}_X\IFF \prj_Xw=w\IFF \fname{Q}_Xw=w.
\end{displaymath}
We conclude that $\prj_X$ fixes $\fname{Q}_Xw$ and $\fname{Q}_X$ fixes $\prj_Xw$ for all $w\in\catname{C}_X$. From \Cref{functorial is order-preserving} we then obtain, for all $w\in\catname{C}_X$:
\begin{displaymath}
	\prj_Xw\leq w\THEN \prj_Xw=\fname{Q}_X\prj_Xw\leq \fname{Q}_Xw
\end{displaymath}
We conclude that $\prj_Xw\leq \fname{Q}_Xw$ for all $w\in\catname{C}_X$. By a symmetric argument, the reverse inequality holds true as well and we have shown that $\prj$ and $\fname{Q}$ coincide on $\catname{C}$.
\end{proof}

\begin{corollary} Suppose $\catname{C}$ is a weight category containing a clustering domain $\catname{D}$. Then $\catname{C}$ admits one and only one projection with image $\catname{D}$: the restriction of the canonical projection $\prj_{\catname{D}}$ to $\catname{C}$.\hfill\qedhere
\end{corollary}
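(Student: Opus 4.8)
The plan is to decompose the claim into its existence and uniqueness halves and dispatch each by appealing to the results already established. Uniqueness is essentially free: since $\catname{D}\subseteq\catname{C}$ is itself a weight category, \Cref{prop:unique_projections} guarantees that $\catname{C}$ admits at most one projection $\prj\colon\catname{C}\to\catname{C}$ whose image coincides with $\catname{D}$. So the entire burden of the proof is to exhibit one such projection, and the natural candidate is the restriction $\prj_\catname{D}|_\catname{C}$ of the canonical projection of \Cref{def:can_projection}.

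First I would check that $\prj_\catname{D}|_\catname{C}$ is a well-defined fibered endofunctor of $\catname{C}$, rather than merely a functor $\catname{C}\to\wghts$. On objects, for $(X,w)\in\catname{C}$ the clustering-domain hypothesis on $\catname{D}$ (\Cref{def:clustering domain}) ensures that $\catname{D}_X\cap\down{w}$ is non-empty and $\sup$-closed, so that $\prj_\catname{D}(X,w)=\left(X,\sup\{u\in\catname{D}_X\cap\down{w}\}\right)$ is a genuine object of $\catname{D}$; since $\catname{D}\subseteq\catname{C}$, it is an object of $\catname{C}$. On morphisms, the preceding Properties of the Canonical Projection proposition shows that $\prj_\catname{D}$ carries any non-expansive $f\colon(X,u)\to(Y,v)$ to a non-expansive map with the same underlying set map; as both the source and target of this image are objects of $\catname{D}\subseteq\catname{C}$ and $\catname{C}$ is a full subcategory of $\wghts$, the image is automatically a $\catname{C}$-morphism. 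Hence $\prj_\catname{D}|_\catname{C}$ is a fibered functor $\catname{C}\to\catname{C}$, and its idempotency and contraction properties (in the sense of \Cref{def:projection}) are inherited verbatim from the same proposition, so it is a projection.

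Next I would identify its image with $\catname{D}$. Every value $\prj_\catname{D}(X,w)$ lies in $\catname{D}$ by construction, so the image is contained in $\catname{D}$. For the reverse containment, given any $(X,d)\in\catname{D}$ --- which is also an object of $\catname{C}$ --- note that $d$ belongs to $\catname{D}_X\cap\down{d}$ and dominates every element of that set, so $\sup\{u\in\catname{D}_X\cap\down{d}\}=d$ and therefore $\prj_\catname{D}(X,d)=(X,d)$. Thus every object of $\catname{D}$ is a fixed point of $\prj_\catname{D}|_\catname{C}$ and hence lies in the image; combined with the fullness of $\catname{C}$ and $\catname{D}$ this identifies the image, on both objects and morphisms, with $\catname{D}$. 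Together with the uniqueness observation, this finishes the argument.

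The step I expect to require the most care is the verification that the restriction genuinely lands inside $\catname{C}$ as a functor, namely the interplay of three facts: that $\catname{D}$ is a clustering domain (so the supremum exists and lies in $\catname{D}$), that $\catname{D}\subseteq\catname{C}$ (so the supremum lands in $\catname{C}$), and that $\catname{C}$ is full in $\wghts$ (so that morphisms between $\catname{D}$-objects are $\catname{C}$-morphisms). None of these is deep, but all three are genuinely needed, and it is worth making explicit that without the containment $\catname{D}\subseteq\catname{C}$ the canonical projection would not restrict to an endofunctor of $\catname{C}$ at all. Everything else is a direct transcription of the two preceding propositions.
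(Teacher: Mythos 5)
Your argument is correct and follows exactly the route the paper intends: uniqueness from \Cref{prop:unique_projections} applied to the weight category $\catname{D}\subseteq\catname{C}$, and existence by restricting the canonical projection $\prj_{\catname{D}}$, whose projection properties and image were established in the two preceding propositions. The paper treats the corollary as immediate and gives no written proof; your write-up simply makes explicit the (correct) details, in particular the role of $\catname{D}\subseteq\catname{C}$ and fullness in ensuring the restriction is an endofunctor of $\catname{C}$.
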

This last corollary emphasizes that the existence of a clustering projection must be characterized in terms of its image category. We have:

\begin{theorem}[Existence of Projections]\label{prop:projections_existence} Let $\catname{D}$ be a full subcategory of the category $\wghts$. Then $\catname{D}$ is a clustering domain if and only if it is the image of a projection functor $\prj\colon \wghts\to\wghts$.
\end{theorem}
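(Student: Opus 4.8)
The plan is to prove the two implications separately. The forward direction is almost immediate from what precedes: if $\catname{D}$ is a clustering domain, the proposition on Properties of the Canonical Projection already shows that $\prj_\catname{D}$ is a fibered, idempotent, contracting endofunctor of $\wghts$, hence a projection in the sense of \Cref{def:projection}, so I only need to identify its image with $\catname{D}$. Since $\prj_\catname{D}(X,w)=(X,\sup\{u\in\catname{D}_X\cap\down{w}\})$ lands in $\catname{D}$ by \Cref{def:clustering domain}, the image is contained in $\catname{D}$; and if $w\in\catname{D}_X$, then $w$ itself belongs to $\catname{D}_X\cap\down{w}$ and dominates everything in $\down{w}$, so $\sup\{u\in\catname{D}_X\cap\down{w}\}=w$ and $\prj_\catname{D}$ fixes $(X,w)$. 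Thus the image equals $\catname{D}$.

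The substance is the reverse direction. Suppose $\catname{D}$ is the image of a projection $\prj\colon\wghts\to\wghts$. The first observation is that, by idempotency, the image of $\prj$ equals its fixed-point set, so $w\in\catname{D}_X$ if and only if $\prj_Xw=w$. I must then verify the three defining requirements of a clustering domain: that $\catname{D}$ is a weight category (closed under pullbacks), that $\catname{D}_X\cap\down{w}$ is non-empty, and that it is sup-closed. The pullback-closure is easy to overlook but essential, since a clustering domain is by definition a weight category. To establish it, fix a map of finite sets $f\colon X\to Y$ and $w\in\catname{D}_Y$. Then $f$ induces a morphism $(X,f^\ast(w))\to(Y,w)$ in $\wghts$ (the defining inequality $f^\ast(w)\leq f^\ast(w)$ holds trivially); applying the fibered functor $\prj$, which preserves the underlying map $f$ and fixes $w$, produces a morphism $(X,\prj_Xf^\ast(w))\to(Y,w)$, whose defining inequality reads $f^\ast(w)\leq\prj_Xf^\ast(w)$. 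Contraction supplies the reverse inequality $\prj_Xf^\ast(w)\leq f^\ast(w)$, so $\prj_Xf^\ast(w)=f^\ast(w)$ and $f^\ast(w)\in\catname{D}_X$. Non-emptiness is then immediate, since contraction gives $\prj_Xw\leq w$ with $\prj_Xw\in\catname{D}_X$, whence $\prj_Xw\in\catname{D}_X\cap\down{w}$.

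The conceptual heart, and the step I expect to be the main obstacle, is sup-closure. Writing $v:=\sup\{u\in\catname{D}_X\cap\down{w}\}$, which exists in $\wghts_X$ because the set is bounded above by $w$, I would argue as follows: every $u$ in the set is a fixed point ($\prj_Xu=u$) and satisfies $u\leq v$, so order-preservation of $\prj_X$ (\Cref{functorial is order-preserving}) gives $u=\prj_Xu\leq\prj_Xv$; taking the supremum over all such $u$ yields $v\leq\prj_Xv$, while contraction gives $\prj_Xv\leq v$, so $\prj_Xv=v$ and $v\in\catname{D}_X$. This is exactly sup-closure, completing the proof that $\catname{D}$ is a clustering domain. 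As a consistency check, \Cref{prop:unique_projections} then forces the given $\prj$ to agree with the canonical projection $\prj_\catname{D}$, so the two projections appearing in the statement are in fact the same map.
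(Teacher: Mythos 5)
Your proof is correct and follows essentially the same route as the paper's: the forward direction via the canonical projection, and the reverse direction by verifying pullback-closure, non-emptiness, and sup-closure using the fixed-point characterization of the image together with contraction and order-preservation. You supply slightly more detail in identifying the image of $\prj_\catname{D}$ with $\catname{D}$, but the argument is the same.
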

\begin{proof} If $\catname{D}$ is a clustering domain then it is the image of the canonical projection $\prj=\prj_\catname{D}$.
Conversely, assume the full subcategory $\catname{D}$ is the image of a clustering projection functor $\prj\colon \wghts\to\wghts$ and let us prove it is a clustering domain. 

First, we need to see that $\catname{D}$ is a weight category. Suppose $f\colon X\to Y$ is a map of finite sets. For $w\in\catname{D}_Y$, consider $v=f^\ast w\in\wghts_X$: then $f\colon (X,v)\to(Y,w)$ is a morphism and hence $f\colon (X,\prj_Xv)\to(Y,\prj_Yw)=(Y,w)$ is a morphism as well; or equivalently, $v=f^\ast w\leq \prj_Xv$. Thus, by the contraction property of $\prj$ we have $v=\prj_Xv\in\catname{D}_X$, as desired.

Note that for any finite set $X$ and $w \in \wghts_X$, the set $\catname{D}_X \cap \down{w}$ is nonempty since in particular, this contains $\prj w$. It remains to prove that $\catname{D}_X$ is $\sup$-closed. Let $U$ be the set of objects in $\catname{D}_X \cap \down{w}$.  If $v = \sup\{u \in U\}$  then $v$ is the minimal weight in $\wghts_X$ satisfying $v \geq u$ for all $u \in U$.  By functoriality, we see that $\prj v \geq \prj u = u$, so we must have $\prj v \geq v$. Since $\prj$ is a projection, and in particular, satisfies the contraction property, $\prj v \leq v$ as well and so $v \in \catname{D}_X$, as required. 
\end{proof}

\begin{remark}
Let $\catname{C} \subset \catname{D}$ be clustering domains. \Cref{prop:unique_projections} implies that we have a unique factorization:
\begin{displaymath}
	\begin{tikzpicture}
		\node (w) at (0,4) {$\wghts$};
		\node (d) at (0,2) {$\catname{D}$};
		\node (c) at (3,2) {$\catname{C}$};
		
		\draw[->, left] (w) to node {$\prj_\catname{D}$} (d);
		\draw[->, below] (d) to node {$\mathrm{Res}_{\catname{D}}({\prj_\catname{C}})$} (c);
		\draw[->, above right] (w) to node {$\prj_\catname{C}$} (c);
	\end{tikzpicture}
\end{displaymath}
where $\prj_C$ and $\prj_D$ are the respective projections associated with $\catname{C}$ and $\catname{D}$ while $\mathrm{Res}_{\catname{D}}({\prj_\catname{C}})$ is the restriction of the functor $\prj_C$ to the subcategory $\catname{D}$. Note that the essential portions of the proof of the proposition depend only on the order structure of $\wghts_X$, and consequently, this factorization holds even when replacing $\sets$ with a category with fewer morphisms (see \Cref{rem:fibered}), such as only injective maps or only surjective maps. 
\end{remark}

\begin{definition}\label{def:stationary}
Let $\fname{C}\colon \wghts \to \sieves$ be a sieving functor and recall the functor $\fname{J}\colon \sieves \to \wghts$ from the proof of \Cref{prop:equivalence}. Then $\fname{C}$ will be called \defn{stationary} if $\fname{J} \circ \fname{C}$ is a projection.
\end{definition} 

Note that the stationary functors that we consider in this paper do not in general satisfy the straightforward generalizations of either representability or excisiveness (see~\cite{cm-2013}) to the setting of coverings. However, in Theorem $6.3$ of that paper, Carlsson and M\'{e}moli show that their notion of finite representability (and therefore also excisiveness) implies a factorization into an endofunctor followed by single-linkage. Thus, our notion of stationarity is analogous to these conditions in the sense of the following theorem.   

\begin{theorem}\label{thm:correspondence}
There is a bijective correspondence between the collection of stationary sieving functors and the collection of clustering domains: every stationary sieving functor factors uniquely through a clustering projection and the Rips sieving functor $\rips$ restricted to a clustering domain. Pictorially, for every stationary sieving functor $\fname{C}$ there is a commutative diagram 
\begin{displaymath}
	\begin{tikzpicture}
		\node (w) at (0,0) {$\wghts$};
		\node (s) at (3, -2) {$\sieves$.};
		\node (d) at (0,-2) {$\catname{D}_{\fname{C}}$};
		
		\draw[->, above right] (w) to node {$\fname{C}$} (s);
		\draw[->, left] (w) to node {$\prj_{\catname{D}_{\fname{C}}}$} (d);
		\draw[->, below] (d) to node {$\mathrm{Res}_{\catname{D}_{\fname{C}}}(\rips)$} (s);
	\end{tikzpicture}
\end{displaymath}
where $\prj_{\catname{D}_{\fname{C}}}$ is the unique associated clustering projection and $\mathrm{Res}_{\catname{D}_{\fname{C}}}(\rips)$ is the Rips sieving functor restricted to $\catname{D}_{\fname{C}}$. 
\end{theorem}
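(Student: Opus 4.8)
The plan is to reduce the entire statement to the theory of projections on $\wghts$ by transporting it across the equivalence $\rips\colon\wghts\to\sieves$ with inverse $\fname{J}$ established in \Cref{prop:equivalence}. The correspondence will be given by two explicit assignments: to a clustering domain $\catname{D}$ I associate the sieving functor $\rips\circ\prj_\catname{D}$, and to a stationary sieving functor $\fname{C}$ I associate the full subcategory $\catname{D}_{\fname{C}}\subseteq\wghts$ defined as the image of the endofunctor $\fname{J}\circ\fname{C}$, which is a projection by \Cref{def:stationary}. The whole argument then amounts to checking that these two assignments are well-defined and mutually inverse.

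First I would verify well-definedness. For the first assignment, since $\fname{J}$ and $\rips$ are mutually inverse, $\fname{J}\circ(\rips\circ\prj_\catname{D})=\prj_\catname{D}$ is a projection, so $\rips\circ\prj_\catname{D}$ is indeed stationary, and the domain it produces is the image of $\prj_\catname{D}$, namely $\catname{D}$ itself. For the second assignment, $\fname{J}\circ\fname{C}$ is a projection $\wghts\to\wghts$, so by \Cref{prop:projections_existence} its image $\catname{D}_{\fname{C}}$ is genuinely a clustering domain. Next I would prove the two assignments invert one another. Starting from a clustering domain $\catname{D}$ and passing through $\rips\circ\prj_\catname{D}$ returns the image of $\fname{J}\circ\rips\circ\prj_\catname{D}=\prj_\catname{D}$, which is $\catname{D}$ again. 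Conversely, starting from a stationary $\fname{C}$ I must show $\fname{C}=\rips\circ\prj_{\catname{D}_{\fname{C}}}$: here $\fname{J}\circ\fname{C}$ is a projection whose image is the clustering domain $\catname{D}_{\fname{C}}$, and by the corollary to \Cref{prop:unique_projections} the only projection $\wghts\to\wghts$ with that image is the canonical $\prj_{\catname{D}_{\fname{C}}}$, so $\fname{J}\circ\fname{C}=\prj_{\catname{D}_{\fname{C}}}$. Composing with $\rips$ on the left and using $\rips\circ\fname{J}=\id{\sieves}$ yields $\fname{C}=\rips\circ\prj_{\catname{D}_{\fname{C}}}$. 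Because $\prj_{\catname{D}_{\fname{C}}}$ takes values in $\catname{D}_{\fname{C}}$, this composite equals $\mathrm{Res}_{\catname{D}_{\fname{C}}}(\rips)\circ\prj_{\catname{D}_{\fname{C}}}$, which is exactly the advertised commutative triangle; the factorization is unique because applying $\fname{J}$ to any factorization $\fname{C}=\mathrm{Res}_\catname{D}(\rips)\circ\prj$ recovers $\prj=\fname{J}\circ\fname{C}$ and hence forces $\catname{D}$ to be its image.

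The one genuinely substantive step, and the place where the hypotheses must be marshalled carefully, is the identification $\fname{J}\circ\fname{C}=\prj_{\catname{D}_{\fname{C}}}$. This is where stationarity (which only guarantees that $\fname{J}\circ\fname{C}$ is \emph{some} projection), \Cref{prop:projections_existence} (which certifies that its image is a clustering domain, so the canonical projection onto it exists), and the uniqueness corollary to \Cref{prop:unique_projections} (which pins down \emph{which} projection it must be) all combine. Everything else is bookkeeping with the mutually inverse pair $\rips$, $\fname{J}$; the only subtlety worth flagging is that the restriction $\mathrm{Res}_{\catname{D}_{\fname{C}}}(\rips)$ is legitimate precisely because the contraction and idempotency of the projection guarantee its image lies inside $\catname{D}_{\fname{C}}$, so that $\rips$ need only be evaluated there.
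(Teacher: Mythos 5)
Your proposal is correct and follows essentially the same route as the paper's (much terser) proof: stationarity gives that $\fname{J}\circ\fname{C}$ is a projection, \Cref{prop:projections_existence} certifies its image is a clustering domain, and the equivalence of \Cref{prop:equivalence} together with the uniqueness of projections yields the factorization. You have simply spelled out the bookkeeping (well-definedness and mutual inversion of the two assignments) that the paper leaves implicit.
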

\begin{proof}
If $\fname{C}\colon \wghts \to \sieves$ is a stationary sieving functor, then $\prj_{\catname{D}_{\fname{C}}} = \fname{J} \circ \fname{C}$ is a projection and so from \Cref{prop:projections_existence} we see that the image $\catname{D}_{\fname{C}}$ of $\prj_{\catname{D}_{\fname{C}}}$ is a clustering domain. The theorem then follows from \Cref{prop:equivalence}. 
\end{proof}

In our first look at sieving functors, we defined the \v{C}ech sieve of \Cref{ex:cech sieve}, which is not a stationary sieving functor: for a fixed weight space, after sufficiently many applications (interleaving applications of the functor $\fname{J}$) it gives the result of the single-linkage sieve (see the discussion of ${\mathbf{L}^k}$ clustering in our previous paper with Hansen~\cite{cghs-consistency}, where we describe \v{C}ech sieving in terms of set relations). In that paper, we exhibited many different sieving functors that fail to be stationary. An interesting generalization of the work here would be to explore more complex characterizations that would include these non-stationary sieving functors. 

\subsection{Single-linkage as a projection}\label{sec:single_linkage}
In this section, we illustrate the relationship between projections and clustering domains by revisiting the well-known single-linkage clustering method from a new perspective. More details than necessary are given in order to provide a template for one way of using the results in the previous section to construct a clustering projection. 

Let $(X, u) \in \wghts$ and $x_1 ,x_2 \in X$ with $x_1 \neq x_2$. Also, let $\Delta^{\varepsilon}_{2}$ be the two point space $\{1,2\}$ with distance $\varepsilon$ between the two points. Then define $\sle_u (x_1 ,x_2 )$ to be the largest $\varepsilon$ for which there exists a morphism (non-expansive map) $f\colon (X,u )\to\Delta^{\varepsilon}_{2}$, with $f(x_i )=i\text{ for }i=1,2$.  For $x_1 = x_2$, define $\sle_u(x_1, x_2) = 0$. Note that if the smallest distance between any two points in $X$ is $u_{\text{sep}}$ (the so-called separation 
of $X$), then $\sle_u (x_1 ,x_2 )\geq u_{\text{sep}}$ since we can map $X$ to $\Delta^{u_{\text{sep}}}_{2}$ arbitrarily and have a non-expansive map. Moreover, notice that $\sle_u(x_1, x_2) \leq u(x_1,x_2)$ since $f$ is required to be non-expansive. This means that the identity set map on $X$ gives a $\wghts$-morphism $(X,u )\to(X,\sle_u)$, or in the language of \Cref{def:projection}, $\sle$ is a contraction. 

\begin{definition}\label{def:cut_metric}
Recall that a \defn{cut metric} (see \cite{Deza-Laurent:cuts_metrics} for a detailed treatment) on any non-empty set $Y$ is a non-negatively weighted finite combination of cuts.  A cut is a metric given by a subset 
$A\subset X$ and its complement $A^c$, with the associated metric $\delta_{A}$ on $X$ is defined as
\[
 	\delta_{A}(y_1 ,y_2 ) =
 	\begin{cases}
  		0 & \text{if }y_1 ,y_2 \text{ are both in }A\text{ or both in }A^c\\
  		1 & \text{otherwise.}
 	\end{cases}
\]
Notice that $\delta_{A}=\delta_{A^c}$ and the trivial cut $\{X,\emptyset\}$ produces the zero metric.
\end{definition}

Now for any fixed $x_1, x_2 \in X$, and any morphism $f\colon (X,u )\to\Delta^{\sle_u (x_1 ,x_2 )}_{2}$ satisfying $f(x_i) = i$ for $i = 1, 2$, the morphism $f$ defines a bi-partition of $X$ into $X_1 =f^{-1}(1)$ and $X_2 =f^{-1}(2)$ with $x_1 \in X_1$ and $x_2 \in X_2$.  The morphism $f$ will then factor through $X$ with the cut metric $\delta_{\{X_1\}} = \delta_{\{X_2\}}$ scaled by 
$\sle_u (x_1 ,x_2 )$:
\[
\begin{tikzcd}
 	(X,u ) \arrow{rd} \arrow{r}{f} & \Delta^{\sle_u (x_1 ,x_2 )}_{2}\\
 	& (X,\sle_u (x_1 ,x_2 ) \delta_{\{X_1\}})\,. \arrow{u}
\end{tikzcd}
\]
Notice that $\sle_u(x_1,x_2)\delta_{\{X_1\}}$ is just the pullback of the metric $\Delta^{\sle_u(x_1,x_2)}_2$ by $f$. In general, we might have several morphisms $f\colon (X,u)\to\Delta^{\sle_u (x_1 ,x_2 )}_{2}$ which give different bi-partitions.  

We now show that the mapping $\sle$ taking $(X, u)$ to $(X, \sle_u)$ is a projection as described in \Cref{def:projection}.

\begin{proposition}
The mapping $\sle\colon \wghts \to \wghts$ is a fibered functor with the contraction property, where $\sle$ takes a morphism $f\colon (X,u) \to (Y, v)$ in $\wghts$ to the same underlying set map.
\end{proposition}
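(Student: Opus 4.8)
The plan is to verify, in order: that $\sle$ genuinely lands in $\wghts$, that it sends $\wghts$-morphisms to $\wghts$-morphisms, that it respects identities and composition, and finally that it is fibered and contractive. Most of these are immediate once the right reformulation of $\sle_u$ is in place, so I would begin by recording it explicitly. A non-expansive map $f\colon (X,u)\to\Delta^{\varepsilon}_2$ with $f(x_i)=i$ is the same data as a bipartition $X=X_1\sqcup X_2$ with $x_1\in X_1$ and $x_2\in X_2$, and non-expansiveness forces $\varepsilon\leq u_{ab}$ for every cross pair $a\in X_1$, $b\in X_2$. Hence for a fixed bipartition the largest admissible $\varepsilon$ is the cross-minimum $\min\{u_{ab}: a\in X_1,\,b\in X_2\}$, and since $X$ is finite there are only finitely many bipartitions, so
\[
\sle_u(x_1,x_2)=\max_{X_1\sqcup X_2}\ \min_{a\in X_1,\,b\in X_2} u_{ab}
\]
is a genuine maximum, the outer maximum ranging over bipartitions with $x_1\in X_1$, $x_2\in X_2$. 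From this formula $\sle_u\geq 0$ is clear, $\sle_u(x,x)=0$ holds by the separate convention for $x_1=x_2$, and symmetry follows because relabeling $1\leftrightarrow 2$ (equivalently, using $\delta_A=\delta_{A^c}$ as already noted) carries a separating bipartition to one with the same cross-minimum. Thus $(X,\sle_u)\in\wghts_X$. Contraction, $\sle_u\leq u$, is also transparent: for any separating bipartition the pair $x_1,x_2$ is itself a cross pair, so the cross-minimum is at most $u_{x_1x_2}$, whence the maximum is too.

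The crux is functoriality on morphisms: given a $\wghts$-morphism $f\colon (X,u)\to(Y,v)$, I must show $\sle_v(f(x),f(x'))\leq\sle_u(x,x')$ for all $x,x'\in X$, since this is exactly the assertion that the same underlying set map defines a morphism $(X,\sle_u)\to(Y,\sle_v)$. I would argue this directly from the extremal definition rather than the formula, because the natural tool is that composites of non-expansive maps are non-expansive. If $f(x)=f(x')$ the left side is $0$ and there is nothing to prove, so assume $f(x)\neq f(x')$ and set $\varepsilon=\sle_v(f(x),f(x'))$. By definition there is a non-expansive $g\colon (Y,v)\to\Delta^{\varepsilon}_2$ with $g(f(x))=1$ and $g(f(x'))=2$. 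Then $g\circ f\colon (X,u)\to\Delta^{\varepsilon}_2$ is non-expansive, being a composite of non-expansive maps, and separates $x$ from $x'$; hence $\varepsilon$ is admissible in the definition of $\sle_u(x,x')$, giving $\sle_u(x,x')\geq\varepsilon=\sle_v(f(x),f(x'))$, as required.

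It then remains to check the functor axioms and the fibered condition, both of which are formal: $\sle$ leaves every underlying set and every underlying set map unchanged, so $\sle(\id{(X,u)})=\id{(X,\sle_u)}$ and $\sle(g\circ f)=\sle(g)\circ\sle(f)$ reduce to the corresponding identities for set maps, and commutativity of the triangle with the forgetful functors to $\sets$ is immediate. I do not anticipate a serious obstacle: the only place any ingenuity is needed is the functoriality step, and even there the content is simply recognizing that the maximum-over-maps definition of $\sle$ interacts with precomposition by $f$ through the fact that non-expansive maps compose. The finiteness of $X$, which guarantees that the defining maximum is attained so that $\sle_u$ is well-defined, together with the symmetry of the construction, are the only points where I would take care to spell out the routine details.
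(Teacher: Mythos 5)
Your proof is correct and follows essentially the same route as the paper's: the key functoriality step is the identical composition argument, precomposing the witness morphism $(Y,v)\to\Delta^{\varepsilon}_2$ with $f$ and invoking the extremal definition of $\sle_u$, while fiberedness, contraction, and the functor axioms are dispatched as formalities. Your extra care in recording the max--min formula over bipartitions (which the paper states only at the end of the section) and in handling the degenerate case $f(x)=f(x')$ are welcome but do not change the substance of the argument.
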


\begin{proof}
 Since $\sle$ takes morphisms to the same underlying set map, it is fibered, and as has already been observed above, $\sle$ is a contraction. Hence the only thing requiring proof is that the map $\sle(g)\colon(X,\sle_u) \to (Y, \sle_v)$ is a $\wghts$-morphism. We simply need to see that for every pair of points $x_1 ,x_2 \in X$ we have $\sle_u(x_1 ,x_2 ) \geq \sle_v (g(x_1 ),g(x_2 ))$.  But we have morphisms $g$ and $f$:
\begin{displaymath}
 	(X,u )\xrightarrow{g}(Y,v)\xrightarrow{f}\Delta^{\sle_v (f(x_1 ),f(x_2 ))}_{2}
\end{displaymath}
whose composition $f\circ g$ is a morphism sending $x_1$ and $x_2$ to different points in the set $\Delta^{\sle_v (f(x_1 ),f(x_2 ))}_{2}$.  Since $\sle_u (x_1 ,x_2 )$ is defined to be the maximum $\varepsilon$ satisfying this property, we must have $\sle_u (x_1 ,x_2 ) \geq \sle_v (g(x_1 ),g(x_2 )).$ 
\end{proof}

\begin{proposition}\label{prop:sl_idempotent}
	The functor $\sle$ is idempotent, i.e. $\sle \circ \sle = \sle$.
\end{proposition}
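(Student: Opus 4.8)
The plan is to establish the two pointwise inequalities $\sle\circ\sle\leq\sle$ and $\sle\circ\sle\geq\sle$ for each fixed finite set $X$ and each pair $x_1,x_2\in X$; since for $x_1=x_2$ both sides vanish by definition, I may assume $x_1\neq x_2$. The first inequality is free: $\sle$ has already been shown to be a contraction, so applying $\sle_X(w)\leq w$ to the weight $w=\sle_u$ gives $\sle_{\sle_u}(x_1,x_2)\leq\sle_u(x_1,x_2)$.

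For the reverse inequality, first I would set $\varepsilon=\sle_u(x_1,x_2)$ and use finiteness of $X$ to attain the maximum in the definition of $\sle_u(x_1,x_2)$: there are only finitely many bi-partitions of $X$, and each contributes the minimum of finitely many weights, so some bi-partition $X=X_1\sqcup X_2$ with $x_1\in X_1$, $x_2\in X_2$ realizes a non-expansive map $(X,u)\to\Delta^{\varepsilon}_2$, equivalently $u(a,b)\geq\varepsilon$ for all $a\in X_1$ and $b\in X_2$. The crucial observation is then that this single partition simultaneously witnesses $\sle_u(a,b)\geq\varepsilon$ for every cross pair $a\in X_1$, $b\in X_2$: since $(X_1,X_2)$ places $a$ and $b$ in distinct parts and $u(a',b')\geq\varepsilon$ holds for all $a'\in X_1$, $b'\in X_2$, the very same map $(X,u)\to\Delta^{\varepsilon}_2$ separates $a$ from $b$, and $\sle_u(a,b)$ is by definition the largest threshold for which such a separating non-expansive map exists.

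Consequently the partition $(X_1,X_2)$ remains admissible at the unchanged level $\varepsilon$ after passing from $u$ to $\sle_u$: the inequality $\sle_u(a,b)\geq\varepsilon$ for all $a\in X_1$, $b\in X_2$ says exactly that the map $(X,\sle_u)\to\Delta^{\varepsilon}_2$ determined by $(X_1,X_2)$ is non-expansive, and it separates $x_1$ from $x_2$. Applying the definition of $\sle$ to the weight $\sle_u$ yields $\sle_{\sle_u}(x_1,x_2)\geq\varepsilon=\sle_u(x_1,x_2)$, and combining the two inequalities gives $\sle\circ\sle=\sle$. The only step I expect to require care is the middle one — reusing a single optimal cut for $u$ as a witness for all the relevant $\sle_u$-values at once — but this is pure bookkeeping rather than computation: because every cross pair of that partition is itself separated by it, none of their $\sle_u$-values can fall below $\varepsilon$.
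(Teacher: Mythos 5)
Your proof is correct and follows essentially the same route as the paper: the contraction property gives $\sle\circ\sle\leq\sle$, and the reverse inequality comes from observing that the optimal separating morphism $(X,u)\to\Delta^{\varepsilon}_2$ remains a valid separating morphism from $(X,\sle_u)$ at the same threshold. The only cosmetic difference is that the paper obtains this last fact in one line by invoking the already-established functoriality of $\sle$ together with $\sle(\Delta^{\varepsilon}_2)=\Delta^{\varepsilon}_2$, whereas you re-derive that instance of functoriality by hand via the explicit optimal bi-partition.
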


\begin{proof}
 Since $\sle$ sends morphisms to the same underlying set map, we only need to check that $\sle$ is idempotent on objects. Notice that for any $\varepsilon > 0$, we have $\sle(\Delta^{\varepsilon}_2) = \Delta^{\varepsilon}_2$. Thus for any morphism $f\colon (X, u) \to \Delta^{\varepsilon}_2$ separating a pair of points $x_1, x_2 \in X$, we have a corresponding morphism $\sle(f)\colon (X, \sle_u) \to \Delta^{\varepsilon}_2$ also separating $x_1, x_2$. Combining this with the fact that $\sle^2(x_1, x_2) \leq \sle(x_1, x_2)$, we get equality. 
 \end{proof}

The previous two propositions then show that $\sle$ is a projection and so by \Cref{prop:unique_projections}, the functor $\sle$ is the canonical projection for its image, which must be a clustering domain. The remaining question is how to characterize this clustering domain. The following lemma will be useful toward this end. 

\begin{lemma}\label{lem:two_points}
	Let $\catname{D}$ be a clustering domain containing $\Delta^{\varepsilon}_2$ for all $\varepsilon \geq 0$. Then $\mathrm{im}(\sle) \subseteq \catname{D}$.
\end{lemma}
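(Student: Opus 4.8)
The plan is to show directly that $(X,\sle_u)\in\catname{D}$ for every $(X,u)\in\wghts$; since every object of $\mathrm{im}(\sle)$ has this form, that yields $\mathrm{im}(\sle)\subseteq\catname{D}$. The idea is to exhibit $\sle_u$ as a supremum of weights that already lie in $\catname{D}_X$ and then invoke sup-closure. The scaled cut metrics produced by the realizing maps of single-linkage are the natural candidates: they live in $\catname{D}$ because of the pullback property together with the hypothesis that $\catname{D}$ contains every $\Delta^\varepsilon_2$.

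Concretely, first I would fix a finite set $X$ and $u\in\wghts_X$. For each ordered pair $x_1\neq x_2$ in $X$, choose a realizing non-expansive map $f_{x_1,x_2}\colon(X,u)\to\Delta^{\sle_u(x_1,x_2)}_2$ with $f_{x_1,x_2}(x_i)=i$ (the maximum defining $\sle_u$ is attained since $X$ is finite), and set $p_{x_1,x_2}:=f_{x_1,x_2}^\ast\bigl(\Delta^{\sle_u(x_1,x_2)}_2\bigr)=\sle_u(x_1,x_2)\,\delta_{\{X_1\}}$ as in the discussion preceding the lemma. Because $\catname{D}$ is a weight category (closed under pullbacks) and contains every $\Delta^\varepsilon_2$, each $p_{x_1,x_2}$ belongs to $\catname{D}_X$.

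The heart of the argument is the following elementary observation about single-linkage: if the bi-partition induced by $f_{x_1,x_2}$ separates a pair $y_1,y_2$, then $f_{x_1,x_2}$ is itself a non-expansive map to $\Delta^{\sle_u(x_1,x_2)}_2$ separating $y_1$ and $y_2$, so by maximality $\sle_u(y_1,y_2)\geq\sle_u(x_1,x_2)$. Since $p_{x_1,x_2}(y_1,y_2)$ equals $\sle_u(x_1,x_2)$ when $y_1,y_2$ are separated and $0$ otherwise, this shows $p_{x_1,x_2}\leq\sle_u$ for every pair; taking $(y_1,y_2)=(x_1,x_2)$ shows $p_{x_1,x_2}(x_1,x_2)=\sle_u(x_1,x_2)$. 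Hence the family $\{p_{x_1,x_2}\}$ lies in $\catname{D}_X\cap\down{\sle_u}$ and satisfies $\sup_{x_1,x_2}p_{x_1,x_2}=\sle_u$.

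The step I expect to be the real obstacle is converting ``$\sle_u$ is a supremum of members of $\catname{D}_X$'' into ``$\sle_u\in\catname{D}_X$,'' because sup-closure as defined only guarantees that the supremum of the \emph{entire} set $\catname{D}_X\cap\down{w}$ lies in $\catname{D}$, not that an arbitrary subfamily has its supremum in $\catname{D}$; moreover one cannot apply sup-closure at $w=u$, since when $\catname{D}$ is large (e.g. $\catname{D}=\wghts$) the projection $\prj_\catname{D}(u)$ may strictly exceed $\sle_u$. The fix is to apply sup-closure at the threshold $w=\sle_u$ itself. Then $\prj_\catname{D}(\sle_u)=\sup\bigl(\catname{D}_X\cap\down{\sle_u}\bigr)\in\catname{D}_X$, and since the family $\{p_{x_1,x_2}\}$ sits inside $\catname{D}_X\cap\down{\sle_u}$ we get $\prj_\catname{D}(\sle_u)\geq\sup_{x_1,x_2}p_{x_1,x_2}=\sle_u$; the contraction property gives the reverse inequality $\prj_\catname{D}(\sle_u)\leq\sle_u$. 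Therefore $\sle_u=\prj_\catname{D}(\sle_u)\in\catname{D}_X$, which completes the proof.
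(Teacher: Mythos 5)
Your proof is correct, and it takes a genuinely different route from the paper's. The paper argues at the level of morphisms: it applies the canonical projection functor $\prj_{\catname{D}}$ to a realizing map $f\colon(X,u)\to\Delta^{\sle_u(x_1,x_2)}_2$ (which is fixed on the target by hypothesis) to get $\sle_u\leq\prj_{\catname{D}}(u)\leq u$ pointwise, and then invokes the idempotency $\sle\circ\sle=\sle$ of Proposition~\ref{prop:sl_idempotent} to run the same sandwich with $\sle_u$ in place of $u$ and conclude $\prj_{\catname{D}}(\sle_u)=\sle_u$. You instead work at the level of objects: you pull the two-point spaces back along the realizing maps to get explicit scaled cut metrics $p_{x_1,x_2}\in\catname{D}_X$, prove the pointwise domination $p_{x_1,x_2}\leq\sle_u$ with equality at $(x_1,x_2)$, and apply sup-closure at $w=\sle_u$. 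Both arguments are sound; yours is more constructive and more self-contained, bypassing the functoriality and idempotency of $\sle$ entirely (it needs only the definition of $\sle_u$, attainment of the maximum, the pullback property, and sup-closure), and as a byproduct it exhibits $\sle_u$ explicitly as the supremum of the cut metrics $\sle_u(x_1,x_2)\,\delta_{\{X_1\}}$, which is essentially the max-min formula for $\sle_u$ recorded at the end of the section. Your handling of the sup-closure subtlety --- that \Cref{def:clustering domain} only guarantees the supremum of the \emph{entire} set $\catname{D}_X\cap\down{w}$ lies in $\catname{D}$, so one must sandwich at $w=\sle_u$ rather than at $w=u$ --- is exactly the right fix and is the one place a naive version of this argument would break. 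The paper's route is shorter given that the functoriality and idempotency of $\sle$ have already been established two propositions earlier.
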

\begin{proof} Let $\prj_{\catname{D}}\colon \wghts \to \wghts$ be the canonical projection with image $\catname{D}$ and $(X, u) \in \wghts$. Then for any $x_1, x_2 \in X$, we have 
\begin{displaymath}
	\sle_u(x_1 ,x_2 ) \leq \prj_{\catname{D}}(u)(x_1 ,x_2 ) \leq u(x_1 ,x_2 )
\end{displaymath}
since $f\colon(X,u)\to\Delta^{\sle_u(x_1 ,x_2 )}_{2}$ yields a morphism $\prj_{\catname{D}}(f)\colon(X,\prj_{\catname{D}}(u)) \to \Delta^{\sle_u(x_1 ,x_2 )}_{2}$.  \Cref{prop:sl_idempotent} then implies that $\sle \circ \prj_{\catname{D}} = \sle$. Replacing $u$ with $\sle_u$ in the above reasoning, we also get 
\begin{displaymath}
	\sle_u(x_1, x_2) \leq \prj_{\catname{D}}(\sle_u)(x_1, x_2) \leq \sle_u(x_1, x_2),
\end{displaymath}
and we see that $\prj_{\catname{D}}$ fixes $\mathrm{im}(\sle)$. 
\end{proof}

For further discussion of this point in a different context, see \cite{cghs-consistency}, Theorem $8$. Here \Cref{lem:two_points} extends the concept of the referenced theorem to the hierarchical setting, where the infrastructure of clustering domains and projections affords a succinct statement.  
In order to state the next result, we need to recall the definition of an ultrametric.

\begin{definition}\label{def:ultrametrics}
An \defn{ultrametric} on a set $X$ is a metric $(X,u)$ satisfying a stronger version of the triangle inequality: for any $x,y,z$ in $X$, we have
\begin{displaymath}
	u(x,z) \leq \max \{u(x,y), u(y,z)\}.
\end{displaymath}
The category of finite ultrametrics $\ults$ is the full subcategory of $\wghts$ with objects the finite ultrametric spaces. 
\end{definition}

\begin{corollary}
 For any $(X, u) \in \wghts$, the weight $(X,\sle_u)$ is an ultrametric.
\end{corollary}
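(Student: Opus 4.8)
The plan is to exploit the cut description of $\sle_u$ already set up above. A non-expansive map $f\colon(X,u)\to\Delta^{\varepsilon}_2$ with $f(x_i)=i$ is precisely the datum of a bipartition $X=X_1\sqcup X_2$ with $x_1\in X_1$, $x_2\in X_2$ and $u(a,b)\geq\varepsilon$ for every cross pair $a\in X_1$, $b\in X_2$. Since the largest admissible $\varepsilon$ for a fixed bipartition is exactly $\min\{u(a,b)\mid a\in X_1,\,b\in X_2\}$, and $X$ being finite there are only finitely many bipartitions, I obtain
\[
	\sle_u(x_1,x_2)=\max_{(X_1,X_2)}\ \min_{a\in X_1,\,b\in X_2} u(a,b),
\]
the maximum ranging over bipartitions separating $x_1$ from $x_2$, and this maximum is attained. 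The crucial observation to isolate here is that the inner quantity $\min_{a\in X_1,b\in X_2}u(a,b)$ depends only on the partition, not on which separated pair one singles out.

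First I would record the easy structural facts: $\sle_u$ is symmetric because $\Delta^{\varepsilon}_2$ is symmetric in its two points, $\sle_u(x,x)=0$ by definition, and $\sle_u\geq 0$ with $\sle_u(x,y)\leq u(x,y)<\infty$ as already noted. Thus it only remains to verify the strong triangle inequality, which in particular yields the ordinary one and shows $(X,\sle_u)$ is a semimetric, hence an ultrametric in the sense of \Cref{def:ultrametrics}.

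The heart of the argument is the strong triangle inequality $\sle_u(x,z)\leq\max\{\sle_u(x,y),\sle_u(y,z)\}$. Set $\varepsilon=\sle_u(x,z)$ and choose, by the attainment above, a bipartition $(X_1,X_2)$ separating $x\in X_1$ from $z\in X_2$ with $\min_{a\in X_1,b\in X_2}u(a,b)=\varepsilon$. Now $y$ lies in $X_1$ or in $X_2$. If $y\in X_2$, the very same bipartition separates $x$ from $y$, so by the displayed formula $\sle_u(x,y)\geq\varepsilon$; if $y\in X_1$, it separates $y$ from $z$, giving $\sle_u(y,z)\geq\varepsilon$. In either case $\max\{\sle_u(x,y),\sle_u(y,z)\}\geq\varepsilon=\sle_u(x,z)$, as required.

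I expect the main (really the only) obstacle to be the reformulation in the first paragraph: unwinding the morphism-into-$\Delta^{\varepsilon}_2$ definition into the maximin-over-cuts formula and, above all, noticing that the minimal cross-distance is an invariant of the cut rather than of the chosen separated endpoints. Once that is in hand, the inequality is forced by a single case split on which side of the optimal cut $y$ falls. As a consistency check one may note that $\sle_u$ coincides with the minimax path distance $m(x,y)=\min_{\text{paths}}\max_{\text{edges}}u$ (the standard subdominant ultrametric associated with single-linkage), which affords an alternative proof via concatenation of optimal paths; but the cut argument is cleaner and stays within the categorical language already developed.
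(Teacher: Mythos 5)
Your argument is correct and is essentially the paper's own: the paper's ``alternative'' direct verification takes the optimal separating map $f\colon(X,u)\to\Delta^{\sle_u(x_1,x_2)}_2$ and performs exactly your case split on which fiber of $f$ contains $y$; your max--min-over-cuts reformulation is just the translation of that morphism into bipartition language (and is the formula the paper itself records at the end of the section). The only difference is that the paper's primary justification is a one-liner via \Cref{lem:two_points} --- $\ults$ is a clustering domain containing all two-point spaces, hence contains $\mathrm{im}(\sle)$ --- which you do not use.
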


\begin{proof}
Following \Cref{lem:two_points}, it is sufficient to notice that the category $\ults$ of ultrametrics contains all two point weight spaces. Alternatively, we could verify the ultrametric inequality directly. Let $x_1, x_2 \in X$ with $x_1 \neq x_2$, let $y\in X$ and let $f\colon (X,u) \to \Delta^{\sle_u(x_1, x_2)}_2$ be a $\wghts$-morphism that separates $x_1$ and $x_2$, so that $f(x_i)=i$. Now, if $y\in f\inv(1)$ then $\sle_u (x_2 ,y) \geq \sle_u (x_1,x_2)$ by construction. Likewise, if $y \in f\inv(2)$ then we have $\sle_u (x_1 ,y) \geq \sle_u (x_1,x_2)$. Since $y$ is in one set or the other, we must have $\sle_u (x_1 ,x_2 ) \leq \max\{\sle_u (x_1 ,y),\sle_u (x_2 ,y)\}$ as desired.
\end{proof}

\begin{proposition}\label{prop:sle_ult}
The image of $\sle$ is precisely the category $\ults$ of ultrametrics. 
\end{proposition}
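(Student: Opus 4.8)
The plan is to prove the two inclusions separately, the easy one coming for free and the real work lying in $\ults \subseteq \mathrm{im}(\sle)$. The inclusion $\mathrm{im}(\sle) \subseteq \ults$ is already the content of the preceding corollary, which asserts that $(X, \sle_u)$ is an ultrametric for every $(X,u) \in \wghts$; equivalently, it follows from \Cref{lem:two_points} applied to $\catname{D} = \ults$. For the reverse inclusion I would first exploit idempotency: since $\sle \circ \sle = \sle$ by \Cref{prop:sl_idempotent}, the image of $\sle$ coincides with its set of fixed points (if $w = \sle_v$ then $\sle_w = \sle_v = w$, and conversely any fixed point is visibly in the image). It therefore suffices to show that every finite ultrametric $(X, u)$ satisfies $\sle_u = u$. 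Because $\sle$ is a contraction in the sense of \Cref{def:projection}, we already have $\sle_u \leq u$, so everything reduces to the reverse pointwise inequality $\sle_u(x_1, x_2) \geq u(x_1, x_2)$ for each pair $x_1 \neq x_2$.

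To establish that inequality I would construct, for fixed $x_1 \neq x_2$ with $\varepsilon := u(x_1, x_2)$, an explicit non-expansive map $f\colon (X, u) \to \Delta^{\varepsilon}_{2}$ separating $x_1$ and $x_2$. Define a relation on $X$ by declaring $x \sim y$ whenever $u(x, y) < \varepsilon$. The strong triangle inequality of \Cref{def:ultrametrics} makes $\sim$ an equivalence relation: reflexivity and symmetry are clear, and transitivity holds because $u(x,y), u(y,z) < \varepsilon$ forces $u(x,z) \leq \max\{u(x,y), u(y,z)\} < \varepsilon$. Let $C$ be the $\sim$-class of $x_1$; since $u(x_1, x_2) = \varepsilon \not< \varepsilon$ we have $x_2 \notin C$. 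Setting $f(x) = 1$ for $x \in C$ and $f(x) = 2$ otherwise gives $f(x_1) = 1$, $f(x_2) = 2$. For non-expansiveness, whenever $f(x) \neq f(y)$ — say $x \in C$ and $y \notin C$ — we cannot have $u(x,y) < \varepsilon$, since that would place $y$ in the class $C$; hence $u(x,y) \geq \varepsilon$, which is exactly the condition required for a morphism into $\Delta^{\varepsilon}_{2}$. This map separates $x_1, x_2$ at scale $\varepsilon$, so $\sle_u(x_1, x_2) \geq \varepsilon = u(x_1, x_2)$, completing the proof.

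The main obstacle is precisely the construction in the second paragraph, and its single load-bearing ingredient is that the threshold relation $\sim$ is an equivalence relation. This is where ultrametricity, as opposed to ordinary metricity, is indispensable: for a general metric the sublevel relation is not transitive, the induced bipartition fails to yield a non-expansive map at the exact scale $\varepsilon$, and one should expect $\sle_u$ to sit strictly below $u$. Verifying the transitivity of $\sim$ from the strong triangle inequality, and then checking that the resulting cut is non-expansive at the tight threshold, is therefore the crux of the argument; the remaining bookkeeping (idempotency, contraction, and the corollary) only organizes it into the stated equality $\mathrm{im}(\sle) = \ults$.
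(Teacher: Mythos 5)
Your proposal is correct and follows essentially the same route as the paper: reduce to showing $\sle_u = u$ for ultrametrics via contraction, then build a non-expansive separating map to $\Delta^{u(x_1,x_2)}_2$ using the equivalence relation $u(y_1,y_2) < u(x_1,x_2)$, whose transitivity is exactly the strong triangle inequality. Your variant of sending everything outside the class of $x_1$ to $2$ (rather than distributing the remaining classes arbitrarily) is an inessential simplification, and it also absorbs the degenerate case $u(x_1,x_2)=0$ that the paper treats separately.
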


\begin{proof} Following the previous corollary, we need to see that every ultrametric $(X,u)$ is in the image of $\sle$ (equivalently, that $u$ is fixed by $\sle$). Given an ultrametric $(X, u)$, and $x_1 ,x_2 \in X_1$ with $x_1 \neq x_2$, it suffices to exhibit a non-expansive map from $(X,u)$ to $\Delta^{u(x_1, x_2)}_{2}$.  If $u(x_1, x_2) = 0$ any map will do, provided $x_i$ is sent to $i$.   
 
The ultrametric property allows us to define an equivalence relation on $X$ by $y_1 \sim y_2$ if $u(y_1 ,y_2 ) < u(x_1, x_2)$.
 
We can construct our morphism $f\colon(X,u)\to\Delta^{u(x_1, x_2)}_{2}$ as follows.  Since $x_1$ and $x_2$ are in different equivalence classes we map everything equivalent to $x_1$ to $1$ and everything equivalent to $x_2$ to $2$.  The remaining equivalence classes can be 
 mapped arbitrarily as long as the entire equivalence class is sent to the same point in $\Delta^{u(x_1, x_2)}_{2}$.
 
Because points in different equivalence classes are at least $u(x_1, x_2)$ apart, we see that $f$ is non-expansive.
\end{proof}

Since we know that single-linkage clustering factors through the unique projection to the category of ultrametrics (see Carlsson and M\'{e}moli \cite{cm-2013}), we now see that, in fact, the functor $\sle$ is another description of this projection. In other words, given a weight $(X, u)$, the weight $\sle_u$ is the maximal ultrametric under $u$. Moreover, we can compose with the Rips sieving functor $\rips\colon \wghts \to \sieves$ and recover the single-linkage sieve:
\begin{displaymath}
	\begin{tikzpicture}
		\node (w) at (0,0) {$\wghts$};
		\node (s) at (3, -2) {$\sieves$};
		\node (u) at (0, -2) {$\ults$};
		
		\draw[->, above right] (w) to node {$\slc$} (s);
		\draw[->, left] (w) to node {$\sle$} (u);
		\draw[->, below] (u) to node {$\rips$} (s);	
	\end{tikzpicture}
\end{displaymath}

The construction described in this section gives us a characterization of the maximal ultrametric $\sle_u$ on $X$ underneath a given weight $u$:
\begin{displaymath}
	\sle_u (x_1 ,x_2 ) = \max_{\substack{A \subset X\\x_1 \in A\\x_2 \in A^c}} \min_{\substack{y_1 \in A\\y_2 \in A^c}} u(y_1,y_2),
\end{displaymath}
where $A^c = X \setminus A$. In other words, we look at splits $A, A^c$ of $X$ which separate $x_1$ and $x_2$ and take one where the two sets are as far apart as possible. 

Finally, the clustering domain $\ults$ is characterized (by \Cref{lem:two_points} and \Cref{prop:sle_ult}) as the smallest clustering domain containing all two-point spaces, allowing one to think of hierarchical single linkage clustering as {\em merely the coarsest} among all stationary sieving methods which agree with the Rips sieve on the two point spaces.

\subsection{Examples of Clustering Domains}\label{section:examples of clustering domains}

By enriching the setting of functorial clustering to allow non-partition based methods, we discover that there are many viable clustering domains (and hence stationary sieving functors via \Cref{thm:correspondence}) with various potentially advantageous properties. Here we present a number of examples that help to illuminate the ideas presented in the theoretical results above and which clarify what is, or is not, a clustering domain.  Once we have identified some clustering domains, additional ones can be constructed by taking 
intersections (by intersection, we mean the full subcategory of $\wghts$ with objects given by taking intersection of the objects).  We can also take categories that have the relevant pull-back property under non-expansive maps, but fail to be $\sup$ closed, and ``$\sup$ close'' them to obtain a clustering domain.  Even if the pull-back property fails, one can restrict the 
class of non-expansive maps to so-called admissible (non-expansive) maps where the pull-back property holds.  This necessarily leads to a less restrictive notion of functoriality.  We discuss all of these ideas below.  However, our first example is more in the way of a 
counterexample.

\begin{example}[Cut metrics and tree metrics]

Consider the following metric $d$ on the set  $X=\{1,2,3,4,5\}$:\\

\begin{minipage}{0.49\textwidth}
\begin{displaymath}
 d=\left(
\begin{array}{ccccc}
  0 & 1 & 1 & 1 & 2\\
  1 & 0 & 2 & 2 & 1\\
  1 & 2 & 0 & 2 & 1\\
  1 & 2 & 2 & 0 & 1\\
  2 & 1 & 1 & 1 & 0
 \end{array}
\right)
\end{displaymath}
\end{minipage}
\begin{minipage}{0.49\textwidth}
\begin{displaymath}
\begin{tikzpicture}
	\tikzstyle{vertex}=[circle, draw, fill=black, inner sep=0pt, minimum size=4pt]
	\node[vertex] (1) at (0,0) [label=left:$1$] {};  
	\node[vertex] (2) at (1.5,1) [label=above:$2$] {};
	\node[vertex] (3) at (1.5,0) [label=below:$3$] {};
	\node[vertex] (4) at (1.5,-1) [label=below:$4$] {};
	\node[vertex] (5) at (3,0) [label=right:$5$] {};
	\path
		(1) edge (2)
		(1) edge (3)
		(1) edge (4)
		(5) edge (2)
		(5) edge (3)
		(5) edge (4)
	 ;   
\end{tikzpicture}
\end{displaymath}
\end{minipage}\\
where $d_{ij}$ is the distance from $i$ to $j$.  This metric is the path metric on the graph pictured above where all the edges have length $1$.  \\

It is well known that $(X,d)$ is not a cut metric (see \Cref{def:cut_metric}). Using the machinery developed in \Cref{section:projections}, we will see how this simple concrete example precludes the existence of a functorial projection to the category of cut metrics. Toward this end, consider now the metrics
\begin{displaymath}
 d_1 =\left(
 \begin{array}{ccccc}
  0 & 1 & 1 & 1 & 2\\[2pt]
  1 & 0 & \tfrac{4}{3} & \tfrac{4}{3} & 1\\[2pt]
  1 & \tfrac{4}{3} & 0 & \tfrac{4}{3} & 1\\[2pt]
  1 & \tfrac{4}{3} & \tfrac{4}{3} & 0 & 1\\[2pt]
  2 & 1 & 1 & 1 & 0
 \end{array}
 \right)
\qquad \text{ and } \qquad
 d_0 =\left(
 \begin{array}{ccccc}
  0 & 1 & 1 & 1 & 0\\
  1 & 0 & 2 & 2 & 1\\
  1 & 2 & 0 & 2 & 1\\
  1 & 2 & 2 & 0 & 1\\
  0 & 1 & 1 & 1 & 0
 \end{array}
\right)
\end{displaymath}
which have cut decompositions
\begin{align*}
 d_1 &= \tfrac{1}{3}\delta_{\{1,2\}}+\tfrac{1}{3}\delta_{\{1,3\}} +\tfrac{1}{3}\delta_{\{1,4\}} +\tfrac{1}{3}\delta_{\{2,5\}}+\tfrac{1}{3}\delta_{\{3,5\}}+\tfrac{1}{3}\delta_{\{4,5\}}\\
 d_0 &= \delta_{\{2\}} +\delta_{\{3\}} +\delta_{\{4\}}.
\end{align*}
Then clearly we have $d_0, d_1 < d$, with $d = \max \{d_0, d_1\}$.

Thus the subcategory $\cuts$ of $\wghts$ is not closed under taking max, and so is not a clustering domain. This leaves the question, however, of whether we could ``sup-close'' the category of cut metrics by throwing in some additional metric spaces and arrive at some proper subcategory of $\mets$. The following theorem and its corollaries show that this is not the case; the reasoning used in the five-point example above is in fact general. Indeed, only finite maximums are needed to produce any metric, and the smaller category $\trees$ of tree metrics is sufficient to generate all of $\mets$. Before stating the theorem, we recall the definition of tree metrics (see Dress~\cite{Dress-tight_extensions} for more details on tree metrics). 

\begin{definition}\label{defn:tree,tree metric} A metric space $(Z,d)$ is said to be a {\em tree} if it satisfies
\begin{enumerate}
	\item for every $x,y \in Z$, there is a unique isometric embedding $\varphi_{xy}\colon [0, d_{xy}] \to Z$ with $\varphi_{xy}(0) = x$ and $\varphi_{xy}(d_{xy}) = y$;
	\item for every arc $z\colon [0,1]\to Z$, $t\mapsto z_t$ and for each $t\in[0,1]$ one has $d_{z_0z_t}+ d_{z_tz_1} = d_{z_0z_1}$. 
\end{enumerate}A metric $d'$ on a finite set $X$ is said to be a {\em tree metric on $X$}, if $(X,d')$ embeds isometrically in some tree $(Z,d)$. The category of tree metrics $\trees$ is then the full subcategory of $\mets$ with objects the collection of tree metrics. As above, we will denote the tree metrics on a fixed finite set $X$ by $\trees_X$.
\end{definition}

\begin{theorem} Let $X$ be a non-empty finite set. Then the $\max$-closure of the set $\trees_{X}$ of tree metrics on $X$ is the set $\mets_{X}$ of metrics on $X$.
\end{theorem}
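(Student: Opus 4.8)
The plan is to prove the two set inclusions separately, with essentially all of the content residing in the inclusion $\mets_X\subseteq(\max\text{-closure of }\trees_X)$. First I would dispose of the easy direction. Every tree metric is a metric, so it suffices to check that $\mets_X$ is itself closed under finite maxima, which is just the preservation of the triangle inequality under $\max$: if $d=\max\{d_1,\dots,d_k\}$ with each $d_i$ a metric, then for all $x,y,z\in X$ one has $d(x,z)=\max_i d_i(x,z)\leq\max_i\big(d_i(x,y)+d_i(y,z)\big)\leq d(x,y)+d(y,z)$. Hence every finite max of tree metrics is again a metric, and the $\max$-closure of $\trees_X$ lands in $\mets_X$ (compare \Cref{prop:closed_max_closed}).

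For the substantive inclusion, the key tool is the Fr\'{e}chet--Kuratowski embedding of a metric space into $\ell_\infty$. Given $d\in\mets_X$, for each $z\in X$ I would define a symmetric weight $\ell_z$ on $X$ by $\ell_z(x,y):=\card{d(x,z)-d(y,z)}$, and then verify three facts. (a) Each $\ell_z$ is a tree metric: the coordinate map $x\mapsto d(x,z)$ realizes $(X,\ell_z)$ as an isometric image inside the real line $\RR$, and $\RR$ is a tree, so $\ell_z\in\trees_X$. (b) Each $\ell_z$ lies beneath $d$, i.e. $\ell_z\leq d$, which is immediate from the triangle inequality $\card{d(x,z)-d(y,z)}\leq d(x,y)$. (c) The finite maximum recovers $d$: for any fixed $x,y$, specializing to $z=y$ gives $\ell_y(x,y)=\card{d(x,y)-d(y,y)}=d(x,y)$, while (b) forces $\ell_z(x,y)\leq d(x,y)$ for every $z$; hence $\max_{z\in X}\ell_z(x,y)=d(x,y)$. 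Since $X$ is finite, $d=\max_{z\in X}\ell_z$ exhibits $d$ as a finite max of tree metrics, finishing the proof.

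The only delicate step is (a), establishing that these ``line metrics'' are genuinely tree metrics in the sense of the definition of tree given above. This reduces to checking that $\RR$ satisfies both defining conditions (unique isometric geodesics, and additivity of distance along arcs), and to observing that the isometric embedding is meant in the semimetric sense, so that the possibly non-injective map $x\mapsto d(x,z)$ is admissible under the convention, adopted throughout, that distinct points may be at distance zero. I do not expect a real obstacle here; the genuine insight is simply the recognition that the Fr\'{e}chet embedding expresses every metric as a finite supremum of line metrics dominated by it. It is worth noting that each $\ell_z$ is not only a tree metric but also a cut metric (a line metric decomposes into its level-set cuts), so the very same decomposition $d=\max_{z\in X}\ell_z$ recovers all of $\mets_X$ from $\cuts_X$ as well; this is exactly what is needed to conclude, in the corollaries to follow, that neither $\trees$ nor $\cuts$ is a clustering domain.
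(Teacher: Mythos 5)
Your proposal is correct and follows essentially the same route as the paper: both arguments extract the coordinate functions of the Kuratowski embedding, observe that each resulting line (semi-)metric $\card{d(\cdot,z)-d(\cdot,z')}$ is a tree metric lying below $d$, and recover $d(x,y)$ exactly by choosing the coordinate indexed by one of the two points. The only cosmetic difference is that the paper produces, for each fixed pair $xy$, a single dominated tree metric agreeing with $d$ on that pair and then invokes max-closedness of $\mets_X$, whereas you assemble the finite maximum over all $z\in X$ explicitly; these are the same decomposition.
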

\begin{proof} Since $\mets_{X}$ is $\max$-closed, it suffices to show that every $d\in\mets_{X}$ and every pair $xy\in\binom{X}{2}$ admit a metric $d'\in\trees_X$ such that $d'\leq d$ and $d'_{xy}=d_{xy}$. 

Let $\rho$ denote Kuratowski's isometric embedding of $(X,d)$ in $\ell^\infty(X)$ \cite{kuratowski}, where for all $z\in X$ we have $\rho\colon z\mapsto\rho_z$ and $\rho_z(w)=d(z,w)$. Let $e_x\colon \ell^\infty(X)\to\RR$ denote the evaluation map $e_x(f)=f(x)$ and set 
\begin{displaymath}
	d'_{zw}:=\left|\rho_z(x)-\rho_w(x)\right|=\left|(e_x\circ\rho)(z)-(e_x\circ\rho)(w)\right|\,.
\end{displaymath}
Thus, $d'$ is a pull-back of the standard metric on the real line, hence a tree (semi-) metric. 
Since $e_x$ is non-expansive and $\rho$ is an isometry we also have $d'\leq d$. 
Finally, at the same time we have 
\begin{displaymath}
	d'_{xy}=|\rho_x(x)-\rho_x(y)|=|0-d_{xy}|=d_{xy}\,,
\end{displaymath}
which finishes the proof.
\end{proof}
An immediate corollary is the non-existence of clustering projections~ ---~ and hence, equivalently, of stationary sieving methods~ ---~ whose image coincides (for all underlying sets $X$) with the set of tree metrics:
\begin{corollary}\label{no clustering onto tree metrics} If $\catname{D}$ is a clustering domain satisfying $\catname{D}_X\supseteq\trees_{X}$ for all $X$, then $\catname{D}_X\supseteq\mets_{X}$ for all $X$.\hfill\qedhere
\end{corollary}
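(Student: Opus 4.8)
The plan is to invoke the preceding theorem directly. That theorem asserts that $\mets_X$ is the $\max$-closure of $\trees_X$, but for the corollary the cleaner input is the sharper statement extracted from its proof: for every $d \in \mets_X$ and every pair $xy \in \binom{X}{2}$ there is a tree metric $d^{(xy)} \in \trees_X$ with $d^{(xy)} \leq d$ and $d^{(xy)}_{xy} = d_{xy}$. Since distinct pairs are handled independently and the matrix entries of $d$ are all attained, this exhibits $d = \max_{xy} d^{(xy)}$ as a \emph{finite} maximum of tree metrics, each of which lies below $d$.

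First I would fix an arbitrary $d \in \mets_X$ and record, using the hypothesis $\trees_X \subseteq \catname{D}_X$, that each $d^{(xy)}$ is an element of $\catname{D}_X$. Because $d^{(xy)} \leq d$, each $d^{(xy)}$ in fact lies in $\catname{D}_X \cap \down{d}$. Next I would appeal to the sup-closure axiom built into the definition of a clustering domain: the weight $v := \sup\{u \in \catname{D}_X \cap \down{d}\}$ satisfies $(X, v) \in \catname{D}$, i.e.\ $v \in \catname{D}_X$.

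It then remains only to identify $v$ with $d$. On one hand, $v$ dominates every member of $\catname{D}_X \cap \down{d}$, and in particular all the finitely many $d^{(xy)}$, so $v \geq \max_{xy} d^{(xy)} = d$. On the other hand, every member of $\catname{D}_X \cap \down{d}$ is itself $\leq d$, whence $v \leq d$. Therefore $v = d$, and since $v \in \catname{D}_X$ we conclude $d \in \catname{D}_X$. As $d$ was arbitrary this gives $\catname{D}_X \supseteq \mets_X$, and as $X$ was arbitrary the corollary follows.

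I do not expect a genuine obstacle: essentially all the work is carried by the theorem, and the corollary is just the marriage of its finite-max decomposition with the sup-closure property of clustering domains. The one point worth flagging is that one should use the \emph{finite} decomposition coming out of the theorem's proof rather than the bare max-closure equality, so that the finitely many tree metrics involved simultaneously lie inside the single down-set $\down{d}$ on which the sup-closure axiom is invoked; this is what lets the single supremum $v$ both dominate their maximum and stay beneath $d$.
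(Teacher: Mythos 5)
Your proof is correct and follows exactly the route the paper intends: the corollary is stated without a written proof precisely because it is the immediate combination of the theorem's finite-max decomposition $d=\max_{xy}d^{(xy)}$ with $d^{(xy)}\in\trees_X\subseteq\catname{D}_X\cap\down{d}$ and the sup-closure axiom of \Cref{def:clustering domain}. Your flagged point about using the per-pair tree metrics from the theorem's proof (rather than the bare closure statement) is exactly the right observation, and nothing further is needed.
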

Thus, from the point of view of functorial clustering (if based on the metric category), the class of tree metrics~ ---~ the first most accepted (and used) generalization of the class of dendrograms~ ---~ is not suitable for characterizing ``clustered'' objects. Moreover, there exists no ``middle ground'' class of metrics lying between tree metrics and general metrics which could be used for this purpose. In particular, since we have the containments $\trees_{X}\subseteq\cuts_{X}\subseteq\mets_{X}$, there is no hope of obtaining a sieving method characterized by the class of cut metrics:
\begin{corollary}\label{no clustering onto cut metrics} If $\catname{D}$ is a clustering domain satisfying $\catname{D}_X\supseteq\cuts_{X}$ for all $X$, then we have $\catname{D}_X\supseteq\mets_{X}$ for all $X$. \hfill\qedhere
\end{corollary}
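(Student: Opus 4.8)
The plan is to deduce this corollary directly from \Cref{no clustering onto tree metrics}, rather than reprove anything from scratch, by exploiting the chain of inclusions $\trees_X\subseteq\cuts_X\subseteq\mets_X$ recorded just above the statement. The entire content of the argument is monotonicity of the hypothesis under these inclusions: a clustering domain that already contains every cut metric necessarily contains every tree metric, and at that point the tree-metric corollary does all the work. So the reduction I would carry out is: (i) recall that $\trees_X\subseteq\cuts_X$; (ii) observe that the hypothesis $\catname{D}_X\supseteq\cuts_X$ therefore forces $\catname{D}_X\supseteq\trees_X$; (iii) apply \Cref{no clustering onto tree metrics} to conclude $\catname{D}_X\supseteq\mets_X$ for all $X$.

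For step (i), I would briefly indicate why every tree metric is a cut metric, since this is the only nonformal ingredient. Given a tree metric $(X,d')$, realize it on a finite weighted tree via an isometric embedding as in \Cref{defn:tree,tree metric}. Each edge $e$ of the tree, when removed, splits the vertex set into two pieces and thereby induces a bipartition $A_e,A_e^c$ of $X$; weighting the associated cut metric $\delta_{A_e}$ by the length of $e$ and summing over all edges recovers $d'$ as a finite nonnegative combination of cuts, so $d'\in\cuts_X$. This is just the classical split decomposition of a tree metric, and it is precisely the containment $\trees_X\subseteq\cuts_X$ that the surrounding discussion already flags as standard; in a write-up I would either cite that or dispatch it with the split-decomposition remark above.

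With $\trees_X\subseteq\cuts_X$ in hand, steps (ii) and (iii) are immediate: the assumption $\catname{D}_X\supseteq\cuts_X$ for all $X$ gives $\catname{D}_X\supseteq\trees_X$ for all $X$, and since $\catname{D}$ is by hypothesis a clustering domain, \Cref{no clustering onto tree metrics} applies verbatim to yield $\catname{D}_X\supseteq\mets_X$ for all $X$. There is no genuine obstacle here. The substantive mathematics --- that the $\max$-closure of the tree metrics is all of $\mets_X$ --- was done in the theorem and packaged into the tree-metric corollary; the cut-metric statement is then a one-line specialization obtained simply by enlarging the hypothesis class from tree metrics to the strictly larger class $\cuts_X$, which contains $\trees_X$. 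The only thing one must be slightly careful about is not overstating step (i) for infinite or degenerate trees, but since we are working throughout with finite (semi)metrics on a finite set $X$, the tree can be taken finite and the edge-split decomposition is finite and unproblematic.
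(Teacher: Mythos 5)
Your proposal is correct and matches the paper's own (implicit) argument exactly: the paper derives this corollary from \Cref{no clustering onto tree metrics} via the containments $\trees_X\subseteq\cuts_X\subseteq\mets_X$ stated immediately before it, which is precisely your steps (i)--(iii). Your brief justification of $\trees_X\subseteq\cuts_X$ by edge-split decomposition is a harmless elaboration of a containment the paper simply treats as standard.
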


\end{example}

We now move on to examples of subcategories which are in fact clustering domains. In \Cref{sec:single_linkage}, we developed an explicit description of a projection functor for single-linkage. Although that projection highlights the naturality of the projection, there is also a computationally efficient algorithm for producing the maximal ultrametric under a given weight: the SLINK algorithm of Sibson~\cite{sibson-73}.  On the other hand, the minimum spanning tree approach to single-linkage clustering in \cite{Gower-Ross:mst} provides an efficient way of producing clusters ({\em i.e.}, the $\slc$ sieving functor). Additionally, the resulting sieves are in this case actually dendrograms, where the sieve itself carries important geometric features (see section \Cref{antipodes} below). This leads us to investigate examples of clustering domains for which:
\begin{enumerate}
	\item[(i)] there is an efficiently computable projection,
	\item[(ii)] the associated sieving functor (after composing the projection with the Rips functor) is computationally tractable,
	\item[(iii)] the associated subcategory of $\sieves$ has objects with a suitable geometric characterization.
\end{enumerate}
In this section, we will identify several examples, focusing on the first two items, while the goal of \Cref{antipodes} is to explore the possibility of extending the third. 

\begin{example}[Metric spaces]
The category $\mets$ itself can be considered a clustering domain in $\wghts$. The canonical projection $\prj_{\mets}$ in this case is obtained by replacing the weights by the distance given by the path metric, {\em i.e.} the length of the shortest path between the points.  This projection is then a well-studied algorithm with several standard computationally efficient approaches. Perhaps most importantly, this example of a clustering domain gives us a way to produce metrics by intersecting with other clustering domains. Care must be taken, however, as in general the projection to an intersection is not as simple as applying one projection and then the other. The associated sieving functor is not known to have an efficient implementation, as it requires computing the maximal simplices of the Rips complex (or equivalently, the maximal cliques of the Rips graph) for arbitrary metric spaces. A geometric characterization of metric sieves is also an open question. 

\end{example}

\begin{example}[Inframetrics]
A finite weight space $(X,u)$ will be called a $\rho$-inframetric space if $u_{xz}\leq\rho\max\{u_{xy},u_{yz}\}$ for every $x,y,z$ in $X$ and $\rho \geq 1$.

Note that when $\rho=1$, we recover the ultrametric condition, and that all metric spaces are $2$-inframetric spaces (but some $2$-inframetrics are not metrics).  It is easy to see that the category of (finite) $\rho$-inframetric spaces is $\sup$ closed and forms a clustering domain.

For $1<\rho<2$, we can intersect the $\rho$-inframetric spaces with $\mets$ to obtain a clustering domain which contains inframetrics inside the category of all metric spaces.  This would yield a projection functor whose associated sieving functor gives a functorial (overlapping) clustering method refining single-linkage. The authors are not aware of tractable algorithms for computing the general $\rho$-inframetric projections (for $\rho > 1$) or the associated sieving functors. Moreover, the geometry of $\rho$-inframetric sieves (those sieves in the image of the associated sieving functor) does not appear to have a known characterization in the non-ultrametric case, when $\rho > 1$. 

In a related vein, we could also work with the $\rho$-relaxed triangle inequality, where $u_{xz}\leq\rho(u_{xy}+u_{yz})$. The $\rho$-inframetric inequality implies the $\rho$-relaxed triangle inequality which in turn implies the $2\rho$-inframetric inequality.  The $\rho$-relaxed inequality also leads to a valid clustering domain.   

\end{example}

\begin{example}[$q$-Metric Spaces] Another example of a clustering domain is the subcategory $\mets_q$ of finite $q$-metric spaces of Segarra {\em et al.} defined in~\cite{Segarra_et_al-q_metric_projections}. For $1\leq q\leq\infty$, a $q$-metric space is a weight space $(X, u)$ where $u$ satisfies
\begin{gather*}
	(u_{xz})^q \leq (u_{xy})^q +(u_{yz})^q \text{ for }q<\infty\\
	\text{or}\\
	u_{xz} \leq \max\{u_{xy},u_{yz}\} \text{ for }q=\infty.
\end{gather*}

When $q=1$ we get ordinary metric spaces and when $q=\infty$ we get ultrametrics.  Note that when $q=2$, we have all triangles being acute.  Recall for ultrametrics all triangles are isosceles with the longest side repeating.  Again it is easy to see that $q$-metric spaces form a clustering domain, and a primary objective of \cite{Segarra_et_al-q_metric_projections} is to investigate the projection for $q < \infty$ defined by $(\prj_{\mets_q}u)_{xy} = \min_{p\in\mathbb{P}(x,y)} \Vert p \Vert_q$,
where $u \in \wghts$, $\mathbb{P}(x,y)$ is the set of edge-paths from $x$ to $y$ in the weighted distance graph, considered as vectors in $\RR^k$ (with $k$ the length of the path) and $\Vert \cdot \Vert_q$ the usual $q$-norm. An efficient algorithm for the sieving functor and a geometric understanding of $q$-metric sieves are still needed. 

\end{example}

\begin{example}[Discretizations] Let $\catname{Int}$ be the category of finite weight spaces with integer distances.  This is obviously $\sup$ closed in the sense that given any finite set $X$ with a weight $u$ there exists a unique maximal integer weight under $u$.  When restricting to metrics ({\em i.e.}, intersecting $\catname{Int} \cap \mets$), we must be careful to first take the integer floor of the distances given by $u$ and then take the resulting path metric on $X$ with those integer weights.  The latter step is necessary since the floor of $u$ may not be a metric. Also note that reversing this procedure is not necessarily the same. 

Notice that this example can be generalized in a straightforward way using any given sequence of non-negative real numbers, providing a mechanism for discretizing in a way appropriate for a given application that does not destroy the underlying theoretical framework for clustering. Generically, an efficient algorithm for the sieving functor is bounded in complexity by an efficient algorithm for Rips sieving, since combinatorially these will produce the same sieve.  

\end{example}

\begin{example}[Quotient spaces] Let $X$ be a fixed finite set and suppose $\sim$ is an equivalence relation on $X$ with quotient $Z$. Let $\pi\colon X \to Z$ denote the quotient map. Then $\pi^{\ast}\colon \mets_{Z} \to \mets_{X}$ embeds $\mets_{Z}$ in $\mets_{X}$ as the set of all metrics satisfying $w_{xy} = 0$ whenever $x \sim y$ in $X$. Observe that $\pi^\ast\mets_{Z}$ is a closed and max-closed subset of $\mets_{X}$, hence also sup-closed, by \Cref{prop:closed_max_closed}. Define $\fname{F}_X\colon \mets_{X}  \to \mets_{X}$ by $\fname{F}_X(w) = \sup \{u \in\pi^\ast\mets_{Z} \mid u \leq w\}$. Then $\fname{F}_X^2 = \fname{F}_X$, $\fname{F}_X(\mets_{X}) = \pi^\ast\mets_{Z}$, $\fname{F}_Xw \leq w$, and $w_1 \leq w_2$ implies $\fname{F}_Xw_1 \leq \fname{F}_Xw_2$ for all $w, w_1, w_2 \in \mets_{X}$. Any map $\fname{H}\colon\mets_{X} \to \mets_{X}$ satisfying these conditions must coincide with $\fname{F}_X$. 

For any clustering domain $\catname{D}$ contained in $\mets$, we have the diagram:
\begin{displaymath}
	\begin{tikzpicture}
		\node (ul) at (0,3) {$\mets_{Z}$};
		\node (ur) at (3,3) {$\mets_{X}$};
		\node (ll) at (0,0) {$\mets_{Z}$};
		\node (lr) at (3,0) {$\mets_{X}$};
		
		\draw[->, above] ([yshift=.5em]ur.west) to node {${\pi^{\ast}}^{-1} \circ \fname{F}_X$} ([yshift=.5em]ul.east);
		\draw[->, below] ([yshift = -.5em]ul.east) to node {${\pi^{\ast}}$} ([yshift=-.5em]ur.west);
		\draw[->, left] (ul) to node {$(\prj_{\catname{D}})_Z$} (ll);
		\draw[->, right] (ur) to node {$(\prj_{\catname{D}})_X$} (lr);
		\draw[->, below] (lr) to node {${\pi^{\ast}}^{-1} \circ \fname{F}_X$} (ll);
		
		\draw[->, red, left, rounded corners=7mm] (.3, 2.4) -- (2.8,2.4) -- node {$\fname{G}_X$} (2.8, .3) -- (.3,.3);
	\end{tikzpicture}
\end{displaymath}

We claim that $\fname{G}_X = {\pi^{\ast}}^{-1} \circ \fname{F}_X \circ (\prj_{\catname{D}})_X \circ {\pi^{\ast}}$ coincides with the map $(\prj_{\catname{D}})_Z$. Indeed, it satisfies all $3$ requirements for characterizing $(\prj_{\catname{D}})_Z$. Thus $\fname{F}_X$ is a natural ``quotient operation'' on metric spaces that commutes with $\prj_{\catname{D}}$.

Recall that, for any metric $w\in\mets_X$ the quotient metric $\bar{w}$ on the quotient $Z$ is defined as follows (see, for example, Section~$3.3$ in~\cite{bbi}). First, for $A,B\in Z$ one considers `paths' from $A$ to $B$:
\begin{displaymath}
	\mathbb{P}(A,B) := \left\{ (x_o, y_1, x_1, \ldots, y_{n-1}, x_{n-1}, y_n) ~\left|~
	\begin{aligned}
	&x_0 \in A, y_n \in B, n \in \mathbb{N},\\
	&x_i \sim y_i \text{ for } i = 1, \ldots, n
	\end{aligned}
	\right.
	\right\}
\end{displaymath}
Next, for each $p \in \mathbb{P}(A,B)$ as above one defines its length as $\ell(p) := \sum_{i = 1}^n w(x_{i-1}, y_i)$. Finally:
\begin{displaymath}
	\bar{w}_{AB} := \inf_{p \in \mathbb{P}(A,B)} \ell(p). 
\end{displaymath}
Since $\pi^{\ast}(\bar{w}) \leq w$ for all $w\in\mets_X$, $\fname{F}_X(w)$ coincides with $\pi^{\ast}(\bar{w})$.  This example is a bit different from those above, but the interesting point here is that quotient metrics respect {\em any} stationary functorial sieving map, providing yet another reason to adopt this particular generalization of hierarchical clustering.

\end{example}

\begin{example}[Any subcategory of $\wghts$] We remind the reader that any subcategory of \textbf{W} with the pull-back property can be $\sup$ closed to give a clustering domain.  The difficult issue is to understand exactly what that closure is and how to construct the projection to it in a practical way. With the ability to take intersections of clustering domains, it is clear that there are many such subcategories, providing different clustering methods. Much more work is needed to characterize additional useful clustering domains. 

\end{example}

\section{Injective envelopes and the role of antipodes in clustering}\label{antipodes}

The significance of \Cref{no clustering onto tree metrics,no clustering onto cut metrics} is in their prohibiting a functorial compromise between the classical notion of hierarchical clustering and the geometric clustering maps based on split decompositions championed by the Dress school. However, the rather complicated relation between split-based clustering methods and the geometry of injective envelopes inspires hope that a sieving functor having some of the qualities of split-based maps could be derived from the geometry of the injective envelopes of spaces chosen to lie in an appropriately defined clustering domain. Setting aside all technical details for a separate account, we focus in this section on introducing and reviewing relevant properties of the clustering domain of {\em $\antip{}$-spaces}, motivated by a study of rooted injective envelopes (see below), as well as some additional clustering sub-domains which, we believe, will prove useful as a means of constructing new classes of hierarchical classifiers lying between the category of dendrograms provided by $\slc$ and the category of unrestricted sieves provided by $\rips$.

\subsection{Injective envelopes and clustering}
Recall\footnote{For a modern, self-cointained account of injective envelopes see~\cite{Lang-injective_envelopes_and_groups}.} that a metric space $X$ is said to be {\em injective}, if, for any isometry $i\colon A\to B$, any non-expansive map $f\colon A\to X$ extends to a non-expansive map $F\colon B\to X$ (in the sense that $F\circ i=f$). As any metric space $X$ embeds isometrically in an injective one (consider Kuratowski's Banach embedding~\cite{kuratowski}), one asks whether an {\em injective envelope} $\epsilon X$ exists, that is: an isometry $e\colon X\to\epsilon X$ into an injective space $\epsilon X$, through which any embedding of $X$ to an injective space must factor. The construction of $\epsilon X$, discovered independently by Isbell~\cite{isbell} and Dress~\cite{Dress-trees_and_tight_spans}, is explicit and easy to describe: first setting
\begin{displaymath}
	P(X,d):=\set{f\colon X\to\RRplus}{
		\forall {x,y\in X}\;
		f(x)+f(y)\geq d_{xy}
	}
\end{displaymath}
one then lets $\epsilon X$ be the subset of those $f\in P(X,d)$ that are pointwise minimal, inheriting its metric from the $\sup$-metric. The mapping $e\colon X\to\epsilon X$ defined by $e(x)(y)=d_{xy}$ is the required embedding.

The injective envelope $e\colon X\to\epsilon X$ provides a canonical way to minimally ``fill-in'' $X$ so that as many of its points as possible appear as endpoints of the complete and hyper-convex~\cite{aronszajn_panitchpakdi-extension} metric space $\epsilon X$, prompting the idea that cluster hierarchies $X$ might be derived from connectivity properties of $\epsilon X$. For example, computing the cut-point hierarchy~\cite{Ward-axioms_for_cutpoints} of $\epsilon X$~\cite{DHM-cut_points_in_envelope} induces a natural nested tree-like structure on $X$ in the sense of~\cite{Dicks_Dunwoody-groups_acting_on_graphs}. The study of the relation between envelopes and split decompositions by the Dress school~\cite{bandelt-dress,dress_buneman,Dress_et_al-comparison_median_tight_span,dmsw-2013} following Bunemann's work~\cite{Buneman-recovery_of_trees} pushes this idea even further, but seems to lose track of the projective/hierarchical aspects of the clustering problem. 

\subsection{Antipodes}
Our present outlook on the usefulness of injective envelopes is motivated by two well known facts. First, any dendrogram $(X,\theta)$~ ---~ when realized as a metric tree with leaf set $X$, in which the length of each edge is equal to half the absolute value of the difference in heights between its endpoints~ ---~ is naturally isometric to the injective envelope of $(X,u)$, where $u\in\ults_{X}$ is the ultra-metric corresponding to the given dendrogram, $(X,\theta)=\rips(X,u)$. Second, a metric tree is the geometric realization of a dendrogram if and only if it contains a {\em root vertex}: a point at equal distances from all the points of $X$ (also known as the {\em leaves} of the dendrogram). We are led to investigate (compact metric) spaces with the property that their injective envelope has a unique root in the same sense. It turns out that these are precisely the (compact, metric) spaces where every point has an antipode, or {\em $\antip{}$-spaces}:
\begin{definition}[Antipode, $\antip{}$-space\footnote{The notion ``Antipodal Space'' has already been put to extensive good use in \cite{HKM-tight_span_of_antipodal1,HKM-tight_span_of_antipodal2}}] A metric space $(X,d)$ is defined to be an \defn{$\antip{}$-space} if every $x\in X$ has an {\em antipode}, that is: there exists $y\in X$ satisfying $d_{xy}=\diam{X,d}$. Denote the set of $\antip{}$-spaces with underlying set $X$ by $\ants{}_X$.
\end{definition}

\begin{remark}\label{rem:ultra vs A-space}
For finite $X$, the collection $\ants{}_X$ is a closed subspace of $\wghts_{X}$, and is clearly $\max$-closed, hence also $\sup$-closed.
Note that $(X,d)\in\ults$ if and only if every subspace of $(X,d)$ is an $\antip{}$-space, or equivalently, if every triple in $(X,d)$ is an $\antip{}$-space. In particular, an ultrametric space is globally an $\antip{}$-space.
\end{remark}

Overall, we conclude that the class of $\antip{}$-spaces forms a clustering domain fibering over the the category of finite sets and {\em surjective} set maps. The surjectivity restriction is necessary, because only for surjective set maps is it true that the pull-back of an $\antip{}$-metric is again an $\antip{}$-metric. The surjectivity restriction is also more than a mere convenience to accommodate the category-theoretical lingo. In fact, it results in relaxing the consistency requirements on the corresponding sieving functor from $\mets$ to $\sieves$, implying that the resulting clustering method is consistent with respect to quotients, but not necessarily consistent with respect to sub-sampling (which corresponds to pull-backs via injective maps). The property of $X$ being an $\antip{}$-space is, in fact, encoded in the geometry of its injective envelope, which we summarize in the following theorem. The proof is straightforward from the definition of $\antip{}$-spaces and the geometry of injective envelopes.
\begin{theorem} Given a compact metric space $(X,d)$, let $\hat X$ denote the extension $X\cup\{\infty\}$, $\infty\notin X$, adding $d_{\infty x}:=\tfrac{1}{2}\diam{X,d}$ for all $x\in X$. The following are equivalent:
\begin{enumerate}[leftmargin=35pt]
	\item $(X,d)$ is an $\antip{}$-space;
	\item $\epsilon\hat X$ is isometric to $\epsilon X$ through an isometry fixing $X$ pointwise;
	\item $\epsilon X$ contains a point~ ---~ denoted $\infty_X$~ ---~ at equal distances to all points of $X$;
	\item $\epsilon X$ contains a point at a distance $\tfrac{1}{2}\diam{X,d}$ to every $x\in X$.
\end{enumerate}
The point $\infty_X$ will be referred to as the {\em root} of $(X,d)$ in $\epsilon X$.\hfill\qedhere
\end{theorem}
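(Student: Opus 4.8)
The plan is to prove the cyclic chain of implications $(1)\Rightarrow(2)\Rightarrow(3)\Rightarrow(4)\Rightarrow(1)$, relying throughout on the explicit description of $\epsilon X$ as the pointwise-minimal functions in $P(X,d)$ equipped with the $\sup$-metric, together with the embedding $e(x)(y)=d_{xy}$. The conceptual heart of the argument is the observation that a point $g\in\epsilon X$ at equal distances $r$ to every $e(x)$ must satisfy $g(x)=r$ for all $x$ (since $\|g-e(x)\|_\infty = g(x)$ whenever $g\geq e(x)$ pointwise, which holds for minimal fillers), and that the constant function $\tfrac12\diam{X,d}$ is itself a valid filler precisely when antipodes abound. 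This ties the \emph{root} condition directly to the antipode condition.

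First I would establish $(1)\Rightarrow(2)$. Assuming every point has an antipode, I would show the constant function $c\colon y\mapsto\tfrac12\diam{X,d}$ lies in $\epsilon X$: it is in $P(X,d)$ because $c(x)+c(y)=\diam{X,d}\geq d_{xy}$, and it is pointwise minimal because lowering its value at any $x$ would violate the filler inequality against $x$'s antipode $y$ (where $d_{xy}=\diam{X,d}$). Adjoining $\infty$ with $d_{\infty x}=\tfrac12\diam{X,d}$ does not enlarge the envelope: the new point is realized by this already-present $c$, so the natural map $\epsilon X\to\epsilon\hat X$ is an isometry fixing $X$. The step $(2)\Rightarrow(3)$ is essentially definitional—the image of $\infty$ under the identification is the desired equidistant point $\infty_X$—and $(3)\Rightarrow(4)$ requires only identifying that the common distance must equal $\tfrac12\diam{X,d}$, which follows by evaluating $\|\infty_X - e(x)\|_\infty=\infty_X(x)$ and using minimality to force $\infty_X(x)=\tfrac12\diam{X,d}$ for every $x$.

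For $(4)\Rightarrow(1)$, suppose $g\in\epsilon X$ has $\|g-e(x)\|_\infty=\tfrac12\diam{X,d}$ for all $x$. Since $g$ is a minimal filler one has $g\geq e(x)$ pointwise, so $\|g-e(x)\|_\infty=g(x)$, forcing $g\equiv\tfrac12\diam{X,d}$. Then for any fixed $x$, minimality of $g$ at the coordinate $x$ means the constraint $g(x)+g(y)\geq d_{xy}$ is tight for some $y$, i.e. $d_{xy}=2\cdot\tfrac12\diam{X,d}=\diam{X,d}$, exhibiting an antipode for $x$.

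The main obstacle I anticipate is the careful handling of minimality in the compact (possibly infinite) setting: the claims ``$g\geq e(x)$ pointwise for minimal $g$'' and ``minimality forces some filler constraint to be tight'' are standard facts about injective envelopes but must be invoked cleanly, and in the infinite case the existence of a maximizing $y$ (an actual antipode rather than a supremum) needs compactness of $(X,d)$ to guarantee the diameter is attained. I expect this is exactly why the hypothesis restricts to compact spaces, and it is the one place where a supremum-versus-maximum subtlety could otherwise derail the argument.
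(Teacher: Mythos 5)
The paper itself gives no argument for this theorem (it is stated with the remark that the proof is ``straightforward from the definition of $\antip{}$-spaces and the geometry of injective envelopes''), so your cyclic chain $(1)\Rightarrow(2)\Rightarrow(3)\Rightarrow(4)\Rightarrow(1)$, built around the constant function $c\equiv\tfrac12\diam{X,d}$ as the candidate root, is exactly the intended route, and its overall structure is sound. However, one intermediate claim on which you lean repeatedly is false as stated: a minimal filler $g\in\epsilon X$ does \emph{not} in general satisfy $g\geq e(x)$ pointwise. Already for a two-point space $\{a,b\}$ with $d_{ab}=1$, the minimal filler $g(a)=g(b)=\tfrac12$ fails $g(b)\geq e(a)(b)=1$. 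The identity you actually need, $\dist{g}{e(x)}=g(x)$ for every $g\in\epsilon X$, is true, but it must be derived differently: the filler inequality gives $d_{xy}-g(y)\leq g(x)$ for all $y$, while minimality forces $g$ to be $1$-Lipschitz (since $g(y)=\sup_z\bigl(d_{yz}-g(z)\bigr)\leq d_{xy}+g(x)$), so $\left|g(y)-d_{xy}\right|\leq g(x)$ with equality at $y=x$. With that repair, your uses of the identity in $(3)\Rightarrow(4)$ and $(4)\Rightarrow(1)$ go through: an equidistant point is forced to be a constant function, minimality of the constant $r$ forces $\sup_y d_{xy}=2r$ for every $x$ (hence $r=\tfrac12\diam{X,d}$), and compactness converts the supremum into an attained antipode, as you correctly flag.

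Two smaller points deserve a sentence each in a final write-up. In $(1)\Rightarrow(2)$, the assertion that adjoining $\infty$ ``does not enlarge the envelope'' rests on the standard but nontrivial fact that $\epsilon X$ is a tight (essential) extension of any intermediate space $X\subseteq Y\subseteq\epsilon X$, so that $\epsilon X$ is also an injective envelope of $\hat X$ once $\hat X$ is realized inside $\epsilon X$ via $\infty\mapsto c$; this should be cited or proved rather than treated as immediate. And your minimality check for $c$ is correct but should be phrased against arbitrary $h\in P(X,d)$ with $h\leq c$: if $h(x)<\tfrac12\diam{X,d}$ and $y$ is an antipode of $x$, then $h(x)+h(y)<\diam{X,d}=d_{xy}$, contradicting $h\in P(X,d)$.
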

The example of a dendrogram viewed as the injective envelope of an ultra-metric space $(X,d)$ provides a hint at the class of subspaces one might want to regard as clusters for the general $\antip{}$-space. When $\epsilon X$ is a dendrogram, each $f\in\epsilon X$ is no more than a specification of distances to the leaves, with the leaves closest to $f$~ ---~ and hence of minimal value under $f$~ ---~ forming the associated `descendant' cluster. See \Cref{fig:4pt_antipodal_spaces_comparison} for a comparison in four-point spaces.
\begin{figure}
	\begin{center}
		\includegraphics[width=\textwidth]{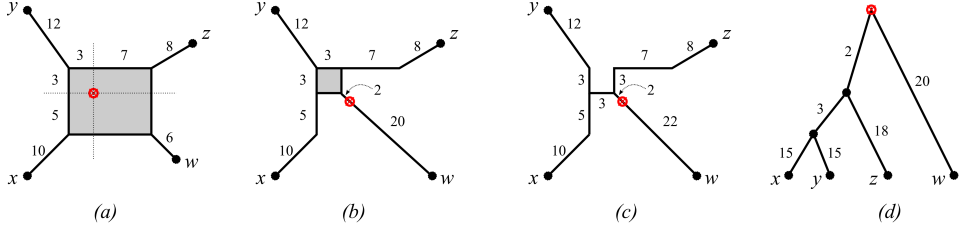}
		\caption{Envelopes of the two generic types of $4$-point $\antip{}$-spaces (a,b), seen as `fattened' versions of a dendrogram (c), also drawn in a more standard representation in (d). The root of each envelope is marked with a red $\otimes$. For a more meaningful comparison, the metric in (a-c) is chosen to be the path metric induced from embedding in the $\ell_1$ plane.\label{fig:4pt_antipodal_spaces_comparison}}%
	\end{center}
\end{figure}

Denote the minimum value of $f$ by $\height{f}$, and the set of points where $\height{f}$ is achieved by $\minset{f}$. For a general $\antip{}$-space we have:
\begin{proposition}\label{prop:minsets in A-spaces} Suppose $(X,d)$ is an $\antip{}$-space, then every $f\in\epsilon X$ satisfies
\begin{displaymath}
	\dist{\infty_X}{f}=\tfrac{1}{2}\diam{X,d}-\height{f}
\end{displaymath}
Moreover, every $f\in\injenv{X,d}$ satisfies
\begin{displaymath}
	\max_{z\in X}f(z)+\height{f}=D
\end{displaymath}
In particular, $f(x)+f(y)=D$ whenever $x\in\minset{f}$ and $y$ is an antipode of $x$.\hfill\qedhere
\end{proposition}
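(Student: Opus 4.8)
The plan is to translate everything into the explicit description of $\epsilon X$ as the set of pointwise-minimal $f\in P(X,d)$, and to exploit the two defining features of such an $f$: membership in $P(X,d)$, i.e.\ $f(x)+f(y)\geq d_{xy}$, and pointwise minimality, which (by the standard tight-span argument; see~\cite{Lang-injective_envelopes_and_groups}) is equivalent to $f(x)=\sup_{y\in X}\bigl(d_{xy}-f(y)\bigr)$ for every $x$. I write $D=\diam{X,d}$ throughout, and note that since every $f\in\epsilon X$ is $1$-Lipschitz, hence continuous, compactness of $X$ guarantees that $\height{f}=\min_z f(z)$ is attained and $\minset{f}\neq\emptyset$. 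The first preliminary step is to identify the root $\infty_X$ concretely: I claim it is the constant function $g\equiv\tfrac12 D$. That $g\in\epsilon X$ is exactly where the $\antip{}$-hypothesis enters, for each $x$ the antipode supplies $\sup_{w}\bigl(d_{xw}-g(w)\bigr)=\bigl(\max_{w}d_{xw}\bigr)-\tfrac12 D=D-\tfrac12 D=\tfrac12 D=g(x)$, so $g$ is minimal; and the one-line computation $\dist{g}{e(x)}=\max_z\lvert\tfrac12 D-d_{xz}\rvert=\tfrac12 D$ (since $d_{xz}$ ranges in $[0,D]$) shows $g$ sits at equal distance $\tfrac12 D$ from every point, hence is the root $\infty_X$ of the preceding theorem.

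Next I would prove the ``Moreover'' identity $\max_{z}f(z)+\height{f}=D$, which I regard as the crux and from which the remaining assertions fall out. The inequality $\leq$ is immediate from minimality: using $d_{zw}\leq D$ one has $f(z)=\sup_{w}\bigl(d_{zw}-f(w)\bigr)\leq D-\inf_w f(w)=D-\height{f}$ for every $z$, hence $\max_z f(z)\leq D-\height{f}$. For the reverse inequality I would pick $x_0\in\minset{f}$ and an antipode $y_0$ of $x_0$; then membership in $P(X,d)$ gives $f(x_0)+f(y_0)\geq d_{x_0y_0}=D$, so $\max_z f(z)\geq f(y_0)\geq D-f(x_0)=D-\height{f}$. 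Combining the two inequalities yields the claimed equality.

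Finally, the ``In particular'' clause and the first displayed formula are short consequences. For $x\in\minset{f}$ and an antipode $y$ of $x$, the $P$-inequality gives $f(x)+f(y)\geq d_{xy}=D$ while the $\leq$-estimate above (applied with $z=y$, noting $f(x)=\height{f}$) gives $f(x)+f(y)\leq D$, so $f(x)+f(y)=D$; this in turn forces $\height{f}\leq\tfrac12 D$, since $f(x)=\height{f}\leq f(y)$ together with $f(x)+f(y)=D$ gives $2\height{f}\leq D$. To obtain $\dist{\infty_X}{f}=\tfrac12 D-\height{f}$ I would then compute $\dist{\infty_X}{f}=\max_z\lvert f(z)-\tfrac12 D\rvert$ and bound both signs: for every $z$ the identity $\max_z f(z)+\height{f}=D$ gives $\height{f}\leq f(z)\leq D-\height{f}$, whence $\lvert f(z)-\tfrac12 D\rvert\leq\tfrac12 D-\height{f}$, and equality is attained at any $z\in\minset{f}$ because $\height{f}\leq\tfrac12 D$. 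The hard part will be the $\leq$ half of the ``Moreover'' step: it is the one place where the pointwise-minimality characterization of $\epsilon X$ (as opposed to mere membership in $P(X,d)$) is essential, combined with the global input, via the $\antip{}$-property, that every point realizes the diameter with some partner.
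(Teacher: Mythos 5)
Your proof is correct, and since the paper states this proposition with no proof at all (it is flagged as immediate from the geometry of injective envelopes), your argument simply supplies the omitted details in the way the authors evidently intended: identify the root $\infty_X$ with the constant function $\tfrac{1}{2}\diam{X,d}$ via the extremality criterion $f(x)=\sup_y\bigl(d_{xy}-f(y)\bigr)$, and combine that criterion with the $P(X,d)$-inequality at a minimizer and its antipode. All the individual estimates check out, including the points where the $\antip{}$-hypothesis and pointwise minimality are genuinely needed, so there is nothing to add.
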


Thus, $\antip{}$-spaces form a clustering domain containing the ultrametric clustering domain and generalizing some pertinent geometric properties of ultrametrics. Moreover:
\begin{remark}\label{rem:projection to A-spaces} Given a metric space $(X,d)$, its projection to the category of $\antip{}$-spaces may be computed recursively as follows. Set $d_0=d$ and for any $t\geq 0$ define the set $E_t=\{xy\,|\,d_t(x,y)=\diam{X,d_t}\}$. If $E_t$ is an edge cover then stop and return $d_t$, as $(X,d_t)$ is an $\antip{}$-space if and only if $E_t$ is an edge cover of $X$. Else, for each $xy\in E_t$ set $d_{t+1}(x,y)$ to equal the second-largest distance in $(X,d_t)$; for $xy\notin E_t$ set $d_{t+1}(x,y)=d_t(x,y)$. 
\end{remark}

\subsection{Clustering domains of $\antip{}$-spaces}\label{sec:a_spaces}

\subsubsection{$4$-point conditions, and more}
Additional classes of $\antip{}$-spaces exist, forming clustering domains which contain the domain of ultra-metrics.
\begin{definition}\label{defn:Am space} Let $m\geq 3$ be an integer. We say that a metric space $(X,d)$ is an \defn{$\antip{m}$-space}, if every subset of cardinality $m$ is an $\antip{}$-space.
\end{definition}
The following lemma is easy to prove:
\begin{lemma}\label{Am space is Am+1} Let $m\geq 3$ be an integer. If $(X,d)$ is an $\antip{m}$-space, then it also an $\antip{n}$-space for any $n\geq m$. In particular, a finite $\antip{m}$-space of cardinality at least $m$ is an $\antip{}$-space.\hfill\qedhere
\end{lemma}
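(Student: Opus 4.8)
The plan is to reduce the statement to the single inductive step that an $\antip{m}$-space is automatically an $\antip{m+1}$-space, and then to chain that implication. Concretely, I would first prove the following: for every integer $k\geq 3$, if $(X,d)$ is an $\antip{k}$-space then it is an $\antip{k+1}$-space. Granting this, for any $n\geq m$ one obtains $\antip{m}\Rightarrow\antip{m+1}\Rightarrow\cdots\Rightarrow\antip{n}$ by finite induction, which yields the first assertion. The ``in particular'' clause then follows by taking $n=\card{X}$: a finite $\antip{m}$-space with $\card{X}\geq m$ is an $\antip{\card{X}}$-space, and since the only subset of $X$ of cardinality $\card{X}$ is $X$ itself, $(X,d)$ is an $\antip{}$-space.

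For the inductive step, I would fix a subset $S\subseteq X$ with $\card{S}=m+1$ and a point $x\in S$, and then produce an antipode of $x$ inside $S$. Write $D:=\diam{S}$, so that $d_{x'y'}\leq D$ for all $x',y'\in S$, with $D$ attained by at least one pair since $S$ is finite. The key reduction is this: if I can find some point $z\in S\minus\{x\}$ for which $\diam{S\minus\{z\}}=D$, then I am done. Indeed, $S\minus\{z\}$ has cardinality $m$, hence is an $\antip{}$-space by hypothesis, and it still contains $x$; so $x$ has an antipode $y\in S\minus\{z\}$ at distance $\diam{S\minus\{z\}}=D$, and this $y\in S$ is the desired antipode in $S$.

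The heart of the argument --- and the step I expect to require the most care --- is showing that such a $z$ always exists. I would argue by contradiction: suppose that for every $z\in S\minus\{x\}$ one has $\diam{S\minus\{z\}}<D$. Then no diameter-realizing pair of $S$ can avoid $z$, for otherwise that pair would survive in $S\minus\{z\}$ and force $\diam{S\minus\{z\}}=D$; hence every such $z$ lies in every pair attaining $D$. Fixing one attaining pair $\{a,b\}$, this means all $m$ points of $S\minus\{x\}$ are contained in the two-element set $\{a,b\}$, which is impossible as soon as $m\geq 3$. This pigeonhole contradiction is precisely where the hypothesis $m\geq 3$ enters, and it is the only nontrivial ingredient; everything else is bookkeeping about subsets and their diameters.
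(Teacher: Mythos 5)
Your proof is correct. The paper does not actually supply an argument here---the lemma is declared ``easy to prove'' and left to the reader---and your induction on $k$, with the pigeonhole observation that some diameter-realizing pair of an $(m+1)$-point subset must survive the deletion of a point other than $x$ once $m\geq 3$, is exactly the intended elementary argument, handling the degenerate case $\diam{S}=0$ and the vacuous case $n>\card{X}$ without difficulty.
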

It is also straightforward to observe that pull-backs of $\antip{n}$-spaces under injective maps remain $\antip{n}$-spaces, as well as that the set of $\antip{n}$-spaces with a fixed finite base space $X$ is a closed and max-closed subset of $\wghts_{X}$. Thus we have the following corollary for $\antip{n}$-spaces. 
\begin{corollary}\label{Am spaces form a clustering domain} For every integer $m\geq 3$, the class of finite metric spaces that are $\antip{m}$-spaces forms a clustering domain over the category of sets and {\em injective} maps. This clustering domain will be denoted by $\ants{m}$.\hfill\qedhere
\end{corollary}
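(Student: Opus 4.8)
The plan is to reduce the statement to \Cref{prop:closed_max_closed}, which guarantees that a weight category whose fibers are closed and closed under finite maxima is automatically a clustering domain. Accordingly I would verify three things about $\ants{m}$: that it is a weight category with respect to the restricted morphism class (pullbacks along \emph{injective} set maps), that each fiber $\ants{m}_X$ is closed in $\RR_{\geq 0}^{\binom{\card{X}}{2}}$, and that each $\ants{m}_X$ is closed under pointwise maxima. Non-emptiness of $\ants{m}_X\cap\down{w}$ comes for free, since the zero metric $\zero$ is trivially an $\antip{}$-space, hence an $\antip{m}$-space, and lies below any $w\in\wghts_X$.

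First I would establish the pullback property along injective maps. Given an injective $f\colon X\to Y$ and $v\in\ants{m}_Y$, I must show $f^\ast v\in\ants{m}_X$. The point is that for any $m$-element subset $A\subseteq X$, injectivity makes $f|_A$ an isometry of $(A,(f^\ast v)|_A)$ onto the $m$-element subspace $(f(A),v|_{f(A)})$ of $Y$; the latter is an $\antip{}$-space because $v$ is an $\antip{m}$-metric, so $A$ is an $\antip{}$-space as well, and $f^\ast v$ is therefore an $\antip{m}$-metric. This is precisely where injectivity is essential and cannot be relaxed to arbitrary set maps: a non-injective $f$ can collapse an $m$-subset onto a set of smaller cardinality, which the $\antip{m}$-hypothesis on $Y$ does not control. (Contrast this with the global domain $\ants{}$, whose defining condition references the \emph{global} diameter and is consequently preserved only under \emph{surjective} pullbacks.)

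Next I would handle the two fiberwise conditions. Closedness of $\ants{m}_X$ is immediate: for each fixed $m$-subset $A$ the $\antip{}$-condition ``every point of $A$ attains the diameter of $A$'' is an equality between two continuous, piecewise-linear functions of the metric entries, hence cuts out a closed set, and $\ants{m}_X$ is the intersection of $\mets_X$ with finitely many such closed sets. The max-closure is the one genuinely substantive step, and I would reduce it to the $\antip{}$-case. If $d,d'\in\ants{m}_X$ and $e=\max(d,d')$, then for every $m$-subset $A$ one has $e|_A=\max(d|_A,d'|_A)$ with $d|_A,d'|_A$ both $\antip{}$-spaces; so it suffices to know that $\ants{}_A$ is max-closed, which is \Cref{rem:ultra vs A-space}. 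The underlying computation is short: the diameter of a pointwise max equals the max of the two diameters, so whichever of $d|_A,d'|_A$ has the larger diameter already supplies, for each point, an antipode realizing that value, and this antipodal distance can only increase under the pointwise max while remaining bounded by the new diameter, forcing equality.

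Finally I would invoke \Cref{prop:closed_max_closed}. Its proof takes place entirely inside a single fiber $\wghts_X$ and uses only closedness and max-closedness to exhibit the supremum of any dominated subfamily as a uniform limit of finite maxima, hence an element of $\ants{m}_X$; it makes no reference to morphisms between distinct underlying sets. Consequently the sup-closure argument is insensitive to the restriction of the ambient morphisms to injective maps, and together with the pullback property and the non-emptiness already noted it shows that $\ants{m}$ satisfies \Cref{def:clustering domain} over the category of sets and injective maps. I expect no serious obstacle overall; the only thing to get right is the bookkeeping that singles out injectivity — rather than surjectivity — as the correct morphism restriction for the local condition defining $\ants{m}$.
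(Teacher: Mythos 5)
Your proof is correct and follows essentially the same route as the paper: the corollary there is justified by the immediately preceding observation that pullbacks of $\antip{m}$-metrics along injective maps remain $\antip{m}$-metrics and that each fiber $\ants{m}_X$ is closed and max-closed, followed by an appeal to \Cref{prop:closed_max_closed}. You simply spell out the details the paper calls ``straightforward'' (the reduction of max-closedness to the $m$-point case and the observation that the sup-closure argument is purely fiberwise), all of which check out.
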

The resulting tower of clustering domains over the category of finite sets and {\em injective} set maps,
\begin{displaymath}
	\ants{3}\subsetneq\ants{4}\subsetneq\cdots\subsetneq\cdots\ants{m}\cdots\,,
\end{displaymath}
is bounded above by the {\em class} $\ants{}$ of compact $\antip{}$-spaces, though we must be careful not to view $\ants{}$ as a clustering domain in this context, since the morphism structures of $\ants{m}$ and $\ants{}$ cannot be reconciled. Similarly, while $\ants{3}=\ults$ as a class of objects (see \Cref{rem:ultra vs A-space}), the inclusion map from $\ults$ to $\ants{4}$ is not a functor, as $\ults$ admits all non-expansive set maps---not just the injective ones. Because the poset structure of the fibers are consistent (see \Cref{rem:fibered}), however, we are still able to consider the factorization of the projections $\prj_{\ants{m}}$ through $\prj_{\ants{}}$.

Thus, the sieving functors from $\ants{m}$ to $\sieves$ should be viewed as functors which are only consistent with respect to sub-sampling. We have yet to obtain a projection algorithm to, say, $\ants{4}$-spaces, or a geometric characterization of their injective envelopes.

\subsection{Discussion: Injective envelopes and practical clustering}

The view of a sieving functor as a projection in the weight/metric category followed by Rips clustering is attractive from the point of view of it providing a relatively simple and geometrically motivated means for constructing such functors {\em abstractly}, but from a computational standpoint it is a bit na\"ive. Indeed, the computational complexity of Rips sieving is, essentially, prohibitive for large data sets. One therefore is after clustering domains whose geometric properties enable efficient algorithms for computing the clustering directly.

In this context, the domain of ultra-metrics provides an extreme example, where computing the sieve associated with a given metric space is not just efficient, but can be efficiently distributed/decentralized.

On the other extreme, the domain of $\antip{}$-spaces seems to offer little improvement over Rips sieving of a general metric: every finite metric space $(X,d)$ extends to an $\antip{}$-space by adding just one more point (for instance, adding a point that has distance $\diam{X,d}$ from every point in $X$ is sufficient); in other words, though geometrically well-motivated, the requirement that $(X,d)$ be an $\antip{}$-space is not sufficiently restrictive. The algorithm in \Cref{rem:projection to A-spaces} suggests that unless the projection $(X,d_A)$ of $(X,d)$ to the domain of $\antip{}$-spaces yields a clear and roughly even {\em partitioning} of $X$ into clusters at some degree of resolution, Rips-clustering of $(X,d_A)$ will still require a search through a significant portion of the $2^n$ possible clusters.

Clearly, this `malfunction' of sieving through $\antip{}$-spaces is due to a lack of some kind of hereditary/hierarchical structure in spaces in this category: no particular collection of subspaces of an $\antip{}$-space is forced to inherit the $\antip{}$-space condition. This motivates the introduction of the notion of $\antip{m}$-spaces, best viewed as a relaxation of the hereditary class of ultra-metric spaces~ ---~ $\antip{4}$-spaces being of particular interest, in view of the key role of $4$-point conditions in the metric clustering literature~\cite{Buneman-recovery_of_trees,Dress-trees_and_tight_spans}.

Judging from the ultra-metric case, it seems plausible that, for a clustering domain $\catname{D}$, the existence of an efficient sieving algorithm requires a combination of (1) geometric constraints that force some notion of ``thinness'' on $\epsilon(X,d)$ whenever $(X,d)\in\catname{D}$; (2) all clusters of $(X,d)\in\catname{D}$ lying in $\catname{D}$, similarly to the notion of {\em excisiveness} introduced in \cite{cm-2013}; and (3) proper restrictions on the base morphisms. The extent to which this vague conjecture holds true for $\ants{m}$, $m\geq 4$, is the subject of ongoing work. 

\section{Acknowledgements} The authors gratefully acknowledge the support of Air Force Office of Science Research under the LRIR 15RYCOR153, MURI FA9550-10-1-0567 and FA9550-11-10223 grants, respectively. Additionally, the authors would like to express their sincere appreciation to the reviewers, whose comments and suggestions have produced more clarity and precision in the exposition.

\bibliographystyle{plain}
\bibliography{./functorial_clustering}

\end{document}